\documentclass{article}

\usepackage{iclr2027_conference,times}
\usepackage[utf8]{inputenc}
\usepackage[T1]{fontenc}
\usepackage{amsmath}
\usepackage{amssymb}
\usepackage{amsthm}
\usepackage{booktabs}
\usepackage{multirow}
\usepackage{algorithm}
\usepackage{algpseudocode}
\usepackage{graphicx}
\usepackage{capt-of}
\usepackage{microtype}
\usepackage{hyperref}
\hypersetup{hidelinks}
\usepackage{url}
\usepackage{tikz}
\usetikzlibrary{arrows.meta,positioning,decorations.pathreplacing}

\newtheorem{proposition}{Proposition}
\newif\ifarxiv
\arxivtrue
\iclrfinalcopy

\title{Constraints Are Graphs, Not Chains: Exact\\
Decoding for Diffusion Language Models}
\author{Jianchang Su\\
University of Connecticut
\And
Wei Zhang\\
University of Connecticut}
\date{}

\begin{document}
\raggedbottom
\maketitle
\lhead{Preprint}

\begin{abstract}
Diffusion language models (dLLMs) predict masked positions in arbitrary order, but their exact
constrained decoders still encode constraints as sequential languages, whose state must track
every unresolved dependency between positions.  For relational constraints this encoding grows
exponentially: for same-order copy, every finite automaton needs $4^k$ states, deterministic or
nondeterministic, and every context-free grammar has size $2^{\Omega(k)}$, while the factor graph
of the same relation has size $O(k)$ and a 16-entry peak table.
We introduce \emph{FactorDLM}, a training-free decoder that represents finite-domain relations as
a factor graph and, at each denoising step, conditions the model's mean-field prediction on that
graph exactly by variable elimination.  Decoding cost then grows exponentially with the induced
width of the constraint graph, which replaces automaton size as the governing parameter.  Because
a finite automaton is a chain-shaped factor graph, one compiler enforces syntax and nonlocal
relations together: on JSON records with cross-field references, a schema automaton alone leaves
references dangling, relational factors alone produce malformed JSON, and the combined plan is
valid on both counts, including on records of variable length.  Across nine relational benchmarks
and three backbones, every output satisfies every declared constraint at 0.4--6.9\% projection
overhead, where unconstrained decoding is 0--79\% valid, and compiled projection answers repeated
queries $13.6\times$ faster than CP-SAT with eight parallel workers.  Because model-free rules
solve three of five standard benchmarks, we construct benchmarks with exact chance and
fixed-template floors, on which selecting among exact constrained samples beats greedy
projection.  Which encoding is cheaper, sequential state or direct factors, depends on the
constraint and is computable before decoding begins.
\end{abstract}

\section{Introduction}

Masked diffusion language models (dLLMs) generate text by repeatedly predicting all masked
positions and committing a subset of them \citep{nie2025llada,ye2025dream}.  This arbitrary-order
mechanism suits infilling and parallel refinement, but it breaks the usual left-to-right recipe
for constrained decoding.  When each position is filtered independently, tokens that are
plausible in isolation can form an assignment that violates a global constraint.

Recent dLLM decoders enforce regular expressions, context-free grammars, lookahead verification,
and finite automata \citep{suresh2025dingo,mundler2025cfg,zhang2026lave,dang2026automata}, which
suit sequential syntax.  Many practical constraints, however, relate distant positions: equality
between fields, foreign-key consistency, graph coloring, and scheduling conflicts.  A
left-to-right state must then remember every earlier decision a later position depends on, so
the number of states can grow exponentially even when the relation graph has constant treewidth.

\begin{figure}[t]
\centering
\resizebox{0.8\textwidth}{!}{
\providecommand{\tokcell}[3]{%
  \draw[thick] (#1, #2) rectangle ++(0.55, 0.55);
  \node at ({#1+0.275}, {#2+0.275}) {\small #3};}
\providecommand{\maskcell}[2]{%
  \fill[black!12] (#1, #2) rectangle ++(0.55, 0.55);
  \draw[thick] (#1, #2) rectangle ++(0.55, 0.55);}
\begin{tikzpicture}[
    every node/.style={font=\normalsize, inner sep=2.5pt},
    arr/.style={-{Stealth[length=2.5mm]}, thick},
    fac/.style={rectangle, draw, thick, fill=black!85, minimum size=2.6mm, inner sep=0pt},
]

\begin{scope}[xshift=0cm]
\node (z1) at (0,   2.5) {$z_1$};
\node (z2) at (1.0, 2.5) {$z_2$};
\node (zd) at (2.0, 2.5) {$\cdots$};
\node (zL) at (3.0, 2.5) {$z_{2k}$};
\node[draw, fill=black!12, thick, rounded corners=1pt, minimum size=6mm,
      inner sep=1pt] (zA) at (4.0, 2.5) {\scriptsize $z_{2k\!+\!1}$};
\node (x1) at (0,   1.6) {$x_1$};
\node (x2) at (1.0, 1.6) {$x_2$};
\node (xL) at (3.0, 1.6) {$x_{2k}$};
\draw[arr] (z1) -- (z2);
\draw[arr] (z2) -- (zd);
\draw[arr] (zd) -- (zL);
\draw[arr] (zL) -- (zA);
\draw[arr] (z1) -- (x1);
\draw[arr] (z2) -- (x2);
\draw[arr] (zL) -- (xL);
\node at (2.0, 0.75) {$|\mathcal S|\ \geq\ 4^{k}$ after $k$ symbols};
\node at (2.0, -0.55) {(a) Constraint as sequential state};
\end{scope}

\begin{scope}[xshift=6.1cm]
\node (a1) at (0,   2.7) {$x_1$};
\node (a2) at (1.0, 2.7) {$x_2$};
\node (a3) at (2.0, 2.7) {$x_3$};
\node (b1) at (0,   1.3) {$x_4$};
\node (b2) at (1.0, 1.3) {$x_5$};
\node (b3) at (2.0, 1.3) {$x_6$};
\node[fac] (g1) at (0,   2.0) {};
\node[fac] (g2) at (1.0, 2.0) {};
\node[fac] (g3) at (2.0, 2.0) {};
\draw[thick] (a1) -- (g1) -- (b1);
\draw[thick] (a2) -- (g2) -- (b2);
\draw[thick] (a3) -- (g3) -- (b3);
\node at (2.75, 2.0) {$f_{=}$};
\node at (1.0, 0.75) {width $1$;\; $4^{2}$-entry tables};
\node at (1.0, -0.55) {(b) Constraint as factor graph};
\end{scope}

\begin{scope}[xshift=10.6cm]
\node[anchor=east] at (-0.15, 3.475) {$x^{t}$};
\tokcell{0}{3.2}{2}\maskcell{0.55}{3.2}\maskcell{1.10}{3.2}
\tokcell{1.65}{3.2}{2}\maskcell{2.20}{3.2}\tokcell{2.75}{3.2}{1}
\draw[arr] (1.65, 3.1) -- (1.65, 2.45);
\node[anchor=west, align=left, font=\small] at (1.9, 2.775)
  {dLLM unaries $u_i$; exact elimination\\ over $\sum_i u_i(x_i) + \sum_a f_a(x_{S_a})$};
\node[anchor=east] at (-0.15, 2.075) {$\hat x$};
\tokcell{0}{1.8}{2}\tokcell{0.55}{1.8}{4}\tokcell{1.10}{1.8}{1}
\tokcell{1.65}{1.8}{2}\tokcell{2.20}{1.8}{4}\tokcell{2.75}{1.8}{1}
\draw[arr] (1.65, 1.7) -- (1.65, 1.05);
\node[anchor=west, align=left, font=\small] at (1.9, 1.375)
  {commit the $\lceil |M_t|/t\rceil$ most-confident\\ slots; re-mask the rest};
\node[anchor=east] at (-0.15, 0.675) {$x^{t-1}$};
\tokcell{0}{0.4}{2}\maskcell{0.55}{0.4}\maskcell{1.10}{0.4}
\tokcell{1.65}{0.4}{2}\tokcell{2.20}{0.4}{4}\tokcell{2.75}{0.4}{1}
\draw[very thick] (2.20, 0.4) rectangle ++(0.55, 0.55);
\node at (2.9, -0.55) {(c) One denoising step};
\end{scope}

\begin{scope}[xshift=18.6cm]
\node[font=\large, anchor=north west, align=left] at (-0.1, 3.6)
  {JSON with cross-field\\ references};
\node[draw, rounded corners, fill=black!5, inner sep=4pt, anchor=north west,
      font=\large\ttfamily] at (-0.1, 2.55) {users: p, q};
\node[draw, rounded corners, fill=red!8, inner sep=4pt, anchor=north west,
      font=\large\ttfamily] at (-0.1, 1.85) {"on": "w"};
\node[font=\large, anchor=west] at (2.35, 1.55) {\textcolor{red!70!black}{undeclared}};
\node[draw, rounded corners, fill=green!10, inner sep=4pt, anchor=north west,
      font=\large\ttfamily] at (-0.1, 1.05) {"on": "q"};
\node[font=\large, anchor=west] at (2.35, 0.75) {exact};
\node[font=\large, anchor=north west] at (-0.1, 0.35)
  {automaton alone: 48--53\% dangling};
\node at (2.2, -0.55) {(d) Syntax and relations in one plan};
\end{scope}

\end{tikzpicture}}
\caption{\textbf{Constraints are graphs, not chains.}  (a) Prior exact decoders run inference in
the chain graphical model of an automaton \citep{dang2026automata}, which needs $4^k$ states for
the copy relation (Proposition~\ref{prop:copy}); (b) its factor graph has width one and 16-entry
tables; (c) one denoising step commits the most confident slots of an exactly valid assignment;
(d) a schema automaton is a chain-shaped factor graph, so schema and cross-field references
compile into one plan (\S\ref{sec:support}).}
\label{fig:overview}
\end{figure}

We therefore ask a representation question: \emph{why should an arbitrary-order generator encode
its constraints as left-to-right state?}  FactorDLM keeps fixed-slot, finite-domain constraints
as a factor graph: at each denoising step, dLLM logits become unary potentials, hard relations
become log factors with values zero or $-\infty$, and exact variable elimination returns a
proposal that satisfies every constraint.  Every proposal is jointly feasible, so each commitment
keeps a valid completion available and the schedule can commit the most confident positions
first; when induced width makes exact inference too expensive, the compiler refuses before
decoding.

Our contributions are as follows.
\begin{itemize}
  \item \textbf{Representation and theory.}  We hold constraints as factor graphs, of which
  sequential automata are the chain case (Figure~\ref{fig:overview}a), so decoding cost is
  exponential in induced width.  For same-order copy the separation is exponential: every
  automaton, deterministic or nondeterministic, needs $4^k$ states and every grammar has size
  $2^{\Omega(k)}$~\citep{filmus2011cfg}, while the direct factor encoding has width one and size
  $O(k)$; for counting and all-different the sequential encoding is the small one, so the cheap
  encoding is a per-constraint choice priced before allocation (Proposition~\ref{prop:copy}).
  Two further results extend exact decoding to the full vocabulary and show what a joint MAP over
  a padded grid adds to the local termination decision (Propositions~\ref{prop:quotient}
  and~\ref{prop:termination}).
  \item \textbf{One exact compiler for syntax and relations.}  A finite automaton is a
  chain-shaped factor graph, so one compiler composes a schema automaton with relational factors;
  on JSON with cross-field references each part alone fails on one axis and the combined plan is
  valid on both, including records of variable length and open-vocabulary fields
  (\S\ref{sec:support}).
  \item \textbf{Guarantee, cost, and boundary.}  Across nine relational benchmarks and three
  backbones, every output satisfies every declared constraint, where independent decoding is
  0--79\% valid (Table~\ref{tab:main}); projection adds 0.41--6.86\% latency, repeated queries run
  $13.6\times$ faster than CP-SAT with eight workers, and we report where our system stops.
  \item \textbf{Measurement.}  With validity guaranteed, we test what each benchmark measures:
  audits trace four published semantic results to a unique completion, a canonical target, or a
  degenerate baseline, and on benchmarks with exact floors, scoring rendered candidates from the
  exact distribution beats its mode in every meeting-planning cell (\S\ref{sec:select}).
\end{itemize}

\section{Related Work}

\paragraph{Constrained decoding for diffusion LLMs.}
Exact constrained decoders for absorbing-mask diffusion
models~\citep{austin2021structured,ye2025dream,nie2025llada} cover regular languages (DINGO,
\citealp{suresh2025dingo}), context-free grammars (\citealp{mundler2025cfg}; EPIC,
\citealp{jin2026epic}), lookahead verification (LAVE, \citealp{zhang2026lave}), and finite
automata \citep{dang2026automata}.  The closest work also multiplies mean-field logits by a
graphical model, a sequential automaton, which is the chain case of a factor
graph~\citep{dang2026automata}.  FactorDLM accepts any finite-domain factor graph, so induced
width replaces automaton or grammar size as the cost parameter, and the chain case remains an
automaton plan that compiles together with relational factors.  Proposition~\ref{prop:copy}, the
measured CFG probe (Table~\ref{tab:cfg}), and the grammar-size lower
bound~\citep{filmus2011cfg} quantify the separation.

\paragraph{Other decoding-time methods.}
For autoregressive models, NeuroLogic decoding searches for outputs that satisfy lexical
constraints~\citep{lu2021neurologic,lu2022neurologic}, and GCD tensorizes automata as decoding
proposals~\citep{dang2026gcd}; solver-aided methods such as SatLM hand a declarative
specification to an external solver~\citep{ye2023satlm}, whereas in FactorDLM the model's scores
enter every step as potentials.  For dLLMs, SOAR studies confidence-switched
commitment~\citep{cao2026soar}, and CoDD multiplies the factorized prediction by a trained,
chain-structured probabilistic circuit~\citep{li2026codd}.  Draft-conditioned, soft, and
verifier-guided methods~\citep{reddy2026dccd,tomasi2026primaldual,shao2026cdc,jung2025csp} give
up the exact-support guarantee for broader constraint classes and are complementary.

\paragraph{Exact inference over learned scores.}
Variable elimination and treewidth provide the inference
primitives~\citep{koller2009pgm,dechter1999bucket}; our contribution is a compiler that prices
any declared encoding before allocation, a proof of when direct and sequential encodings
separate, and a plan that serves changing potentials under a declared width.  Knowledge
compilation serves the same repeated-query workload: a compiled circuit answers changing-weight
queries in time linear in its size~\citep{darwiche2002map}, and every width-$w$ elimination plan
embeds in a circuit of size $O(L\,d^{w+1})$~\citep{darwiche2003differential}.  We compile
elimination plans because their width, and with it the decision to refuse, is computable before
allocation, and because buckets run as dense device kernels; beyond the declared width, circuits
and approximate inference such as loopy belief propagation~\citep{koller2009pgm} are the natural
extensions, the latter at the cost of the guarantee.
Constrained conditional models and integer-programming inference also combine learned scores with
declarative hard constraints~\citep{roth2004lp,chang2012ccm}, as posterior regularization does
during learning~\citep{ganchev2010posterior}, but they solve each instance once from fixed scores;
arbitrary-order decoding solves the same topology at every step under changing scores, and
correctness must hold after each commitment.

\paragraph{Benchmark shortcuts.}
Shortcut learning describes models that exploit unintended regularities in place of the intended
capability~\citep{geirhos2020shortcut,gururangan2018annotation}.  We apply the same test to the
benchmarks, asking whether a model-free rule reaches the published metric; exact projection makes
the test clean by removing validity as a confound.

\section{FactorDLM}

\subsection{Constrained mean-field prediction}

Let $x=(x_1,\ldots,x_L)$ be the output slots, where slot $i$ takes values in a finite domain
$\mathcal X_i$.  A domain is the task's label set, for example four answer letters or the digits
1--4, and the dLLM logits are restricted to it; Proposition~\ref{prop:quotient} extends a domain
to the full vocabulary.  At denoising state $x^t$, the dLLM supplies unary logits
$u_i(x_i;x^t)$.  A constraint is a set of log factors $f_a(x_{S_a})$ with scopes
$S_a\subseteq\{1,\ldots,L\}$, and a hard factor is zero on allowed assignments and $-\infty$
elsewhere.  FactorDLM uses
\begin{equation}
 p_F(x\mid x^t,C)=\frac{1}{Z(x^t,C)}
 \exp\left\{\sum_{i=1}^L u_i(x_i;x^t)+\sum_a f_a(x_{S_a})\right\}.
 \label{eq:factor-posterior}
\end{equation}
The model evidence stays in the unary terms, and the factors restrict the support to the declared
relations.  Throughout, \emph{projection} means exact conditioning on this support: we multiply
the mean-field prediction by the hard factors and renormalize.  Exactness is per step: each
proposal is an exact MAP or sample of Equation~\eqref{eq:factor-posterior}, and
\S\ref{sec:select} measures how far the composed multi-step sampler departs from the first step's
distribution.  Auxiliary variables enter the same graph: a deterministic schema automaton becomes
a chain of state variables linked by transition factors, and indicators or accumulators carry
running aggregates.  The compiler admits only auxiliary variables that are functions of the output
slots, so every output has exactly one satisfying extension and the marginal over outputs equals
the mean-field prediction conditioned on feasibility.  Outputs of variable length use a length
bound $L$ and an end-of-text padding symbol, with termination carried by the automaton state.

\begin{figure}[t]
\centering
\resizebox{0.8\textwidth}{!}{
\definecolor{fdV1}{HTML}{6BAED6}
\definecolor{fdV2}{HTML}{F2C14E}
\definecolor{fdV3}{HTML}{B39DDB}
\definecolor{fdV4}{HTML}{F4A259}
\definecolor{fdFactor}{HTML}{7B4FA0}
\definecolor{fdValid}{HTML}{2E8B57}
\definecolor{fdReject}{HTML}{C8453B}
\providecommand{\fdtok}[4][]{%
  \filldraw[fill=fdV#4!45, draw=black!60, line width=0.5pt, #1]
    (#2-0.17,#3-0.17) rectangle (#2+0.17,#3+0.17);
  \node at (#2,#3) {#4};}
\providecommand{\fdmask}[2]{%
  \filldraw[fill=black!8, draw=black!45, line width=0.5pt]
    (#1-0.17,#2-0.17) rectangle (#1+0.17,#2+0.17);
  \draw[black!35, line width=0.4pt]
    (#1-0.17,#2-0.17) -- (#1+0.17,#2+0.17) (#1-0.17,#2+0.17) -- (#1+0.17,#2-0.17);}
\begin{tikzpicture}[
    font=\scriptsize,
    >={Stealth[length=1.7mm, width=1.5mm]},
    arr/.style={->, line width=0.8pt, black!70},
    box/.style={rounded corners=3pt, line width=0.7pt, align=center, inner sep=2pt},
]

\node at (1.58,3.8) {$x^t$ at $t=2$};
\fdtok{0.68}{3.4}{2}
\fdmask{1.04}{3.4}
\fdmask{1.40}{3.4}
\fdtok{1.76}{3.4}{2}
\fdmask{2.12}{3.4}
\fdtok{2.48}{3.4}{1}
\node[box, fill=black!12, draw=black!45, minimum width=1.4cm, minimum height=0.84cm]
  (llm) at (3.8,3.4) {{\footnotesize\bfseries dLLM}\\mask predictor};
\draw[arr] (2.67,3.4) -- (llm.west);
\foreach \x/\p/\q/\r/\s in {
    5.48/0.05/0.15/0.10/0.70, 5.84/0.33/0.40/0.10/0.17, 6.56/0.05/0.10/0.50/0.35} {
  \fill[fdV1!85] (\x-0.15,3.08) rectangle (\x-0.08,3.08+0.66*\p);
  \fill[fdV2!85] (\x-0.07,3.08) rectangle (\x+0.00,3.08+0.66*\q);
  \fill[fdV3!85] (\x+0.01,3.08) rectangle (\x+0.08,3.08+0.66*\r);
  \fill[fdV4!85] (\x+0.09,3.08) rectangle (\x+0.16,3.08+0.66*\s);
}
\foreach \x/\o in {5.12/-0.07, 6.20/-0.07, 6.92/-0.15} {
  \fill[black!22] (\x+\o,3.08) rectangle (\x+\o+0.07,3.74);}
\draw[black!45, line width=0.4pt] (4.95,3.08) -- (7.09,3.08);
\draw[arr] (llm.east) -- (4.88,3.4);
\node at (6.02,2.84) {per-slot scores $u_i$};
\node[box, fill=fdFactor!14, draw=fdFactor, text=fdFactor!80!black, minimum width=1.9cm,
      minimum height=0.84cm] (ve) at (8.45,3.4) {{\bfseries exact variable}\\{\bfseries elimination}};
\coordinate (fork) at (7.28,3.4);
\draw[arr] (7.14,3.4) -- (ve.west);
\draw[->, line width=0.7pt, black!45, dashed] (fork) -- (7.28,4.45) -- (9.72,4.45);
\fill[black!70] (fork) circle (0.045);
\node[anchor=south, text=black!60] at (8.5,4.45) {argmax per slot};

\fdtok{9.93}{4.45}{2}
\fdtok{10.29}{4.45}{4}
\fdtok[draw=fdReject, line width=1.1pt]{10.65}{4.45}{2}
\fdtok{11.01}{4.45}{2}
\fdtok[draw=fdReject, line width=1.1pt]{11.37}{4.45}{3}
\fdtok{11.73}{4.45}{1}
\node[text=fdReject, font=\bfseries\small] at (12.12,4.45) {$\times$};
\node[anchor=west, text=fdReject, align=left] at (12.35,4.45) {breaks $x_2{=}x_5$\\and $x_3{=}x_6$};

\fdtok{9.93}{3.4}{2}
\fdtok{10.29}{3.4}{4}
\fdtok{10.65}{3.4}{1}
\fdtok{11.01}{3.4}{2}
\fdtok{11.37}{3.4}{4}
\fdtok{11.73}{3.4}{1}
\draw[fdValid, line width=1.1pt, rounded corners=2pt] (9.72,3.19) rectangle (11.94,3.61);
\draw[arr] (ve.east) -- (9.72,3.4);
\node[text=fdValid, font=\small] at (12.12,3.4) {$\checkmark$};
\node[anchor=west, text=fdValid!80!black, align=left] at (12.35,3.4)
  {exact joint\\proposal $\hat x$};

\draw[arr, fdValid] (10.29,3.18) -- (10.29,1.59);
\draw[arr, fdValid] (11.37,3.18) -- (11.37,1.59);
\node[fill=white, inner sep=1pt, text=fdValid!80!black] at (10.29,2.4) {0.70};
\node[fill=white, inner sep=1pt, text=fdValid!80!black] at (11.37,2.4) {0.35};
\node[anchor=west, align=left] at (12.0,2.4)
  {commit the $\lceil |M_t|/t\rceil = 2$\\most confident slots};
\fdtok{9.93}{1.4}{2}
\fdtok[draw=fdValid, line width=1.1pt]{10.29}{1.4}{4}
\fdmask{10.65}{1.4}
\fdtok{11.01}{1.4}{2}
\fdtok[draw=fdValid, line width=1.1pt]{11.37}{1.4}{4}
\fdtok{11.73}{1.4}{1}
\node[anchor=west] at (12.0,1.4) {$x^{t-1}$};

\draw[->, line width=0.8pt, black!60, rounded corners=4pt]
  (9.76,1.4) -- (9.55,1.4) -- (9.55,0.2) -- (0.12,0.2) -- (0.12,3.4) -- (0.51,3.4);
\node[fill=white, inner sep=1.5pt, text=black!70] at (5.0,0.2)
  {next step $t-1$, until every slot is committed};
\fdmask{12.2}{0.78}
\node[anchor=west, text=black!70] at (12.42,0.78) {masked slot};
\fill[black!22] (12.165,0.22) rectangle (12.235,0.52);
\node[anchor=west, text=black!70] at (12.42,0.37) {clamped (committed)};

\filldraw[fill=fdFactor!6, draw=fdFactor!35, rounded corners=5pt] (0.32,0.48) rectangle (9.35,2.42);
\node[anchor=west, font=\footnotesize\bfseries, text=fdFactor!85!black] at (0.42,2.17)
  {Compile once};
\foreach \x in {0.8,1.25,1.7} {
  \draw[fdFactor, line width=0.6pt] (\x,1.62) -- (\x,0.98);
  \filldraw[fill=white, draw=fdFactor, line width=0.6pt] (\x,1.62) circle (0.12);
  \filldraw[fill=white, draw=fdFactor, line width=0.6pt] (\x,0.98) circle (0.12);
  \fill[fdFactor] (\x-0.065,1.235) rectangle (\x+0.065,1.365);
}
\node[text=fdFactor!85!black] at (1.25,0.67) {$x_i = x_{i+3}$};
\node[box, fill=white, draw=fdFactor, minimum width=1.6cm, minimum height=0.64cm]
  (comp) at (3.25,1.3) {{\bfseries symbolic}\\{\bfseries compiler}};
\node[box, fill=white, draw=fdFactor, minimum width=1.6cm, minimum height=0.64cm]
  (plan) at (8.45,1.3) {{\bfseries reusable}\\{\bfseries plan}};
\draw[arr, fdFactor] (1.92,1.3) -- (comp.west);
\draw[arr, fdFactor] (comp.east) --
  node[anchor=south, text=black] {induced width $w$ within budget}
  node[anchor=north, text=fdReject] {over budget: reject early} (plan.west);
\draw[arr, fdFactor] (plan.north) -- (ve.south);
\node[anchor=east, text=fdFactor!85!black] at (8.35,2.2) {reused at every step};

\end{tikzpicture}}
\caption{\textbf{Overview of FactorDLM} on the copy relation $x_i=x_{i+3}$ (cell colors encode
values).  The same per-slot scores give an invalid assignment under per-slot argmax and a valid
joint proposal under exact elimination over a plan compiled once; the most confident proposed
slots are committed and the next step starts from $x^{t-1}$.}
\label{fig:system}
\end{figure}

\paragraph{Exact inference.}
Variables are eliminated in log space along a greedy min-fill order, or the best of seeded
randomized min-fill orders when greedy is poor.  Max-reduction with backtracking gives a MAP
assignment, sum-reduction gives the partition function $Z$, exact backward sampling, and exact
marginals, and Lawler partitioning over repeated MAP calls gives the exact top-$K$ assignments;
committed slots enter as clamped unary evidence.  If the largest bucket holds $w+1$ variables
with domain size at most $d$, time and memory contain an $O(d^{w+1})$ term, precisely $\exp W$
for the weighted width $W=\max_B \sum_{i\in B}\log|\mathcal X_i|$ over buckets $B$.  Treewidth
is NP-hard, so we report the executed order's width $w$ and the measured peak table per
constraint family.

\paragraph{Symbolic compilation and device-resident execution.}
A symbolic compiler fixes bucket routing and message shapes without allocating a dense message,
rejects any bucket or program over its declared entry budget, and transfers the fixed factors to
the device once, so changing dLLM logits reuse the plan at every step: one elimination sweep and
one backtracking or sampling sweep, at a cost set by the compiled table sizes.  A CPU log-space
engine is the independent numerical reference.  Every invalid assignment violates a hard factor
and has zero probability under Equation~\eqref{eq:factor-posterior}, so exact MAP or sampling
returns a valid assignment whenever $Z>0$; zero mass, malformed inputs, and over-budget tables
raise errors.

\subsection{Denoising and commitment}

Projection returns a complete valid assignment, and the decoder must still decide which positions
to reveal.  Let $M_t$ be the masked slots at step $t$, counting down from $T$ to 1.  Each step
draws an exact MAP assignment or exact sample from Equation~\eqref{eq:factor-posterior}, commits
the $\lceil|M_t|/t\rceil$ proposed values with the highest model confidence, and clamps them as
evidence (Figure~\ref{fig:system}; Algorithm~\ref{alg:decode} in the appendix), so every later
proposal conditions on all commitments and the completed output stays in the support.  In
stochastic decoding each step draws the full proposal by exact backward sampling, exact for that
step's constrained model, and the output composes these per-step conditional samples.
Group-level schedules reduce latency without a replicated semantic gain
(Appendix~\ref{app:schedule-details}).

\subsection{Which encoding is compact}

Every constraint here is a factor graph and an automaton is the chain case, so the question is
which \emph{encoding} to declare: direct, with factors on the output slots, or sequential, with a
chain of auxiliary states.  Consider $k$ four-way answers followed by the same answers in the
same order, the constraint $x_i=x_{k+i}$: its direct encoding is a matching of $k$ equality
factors, with induced width one and a 16-entry peak table.  Over all lengths these strings form
the copy language $\{ww\}$, outside the context-free languages; a grammar covers each fixed
length, but any grammar for the length-$2k$ case has size $2^{\Omega(k)}$~\citep{filmus2011cfg}.

\begin{proposition}[Sequential state against direct encoding]
\label{prop:copy}
(i) Every finite automaton, deterministic or nondeterministic, that recognizes four-symbol
same-order copy of length $2k$ has at least $4^k$ states, so every sequential encoding has state
domains of size $4^k$, while the direct encoding has $k$ factors of 16 entries, total size $O(k)$,
and induced width one.  (ii) Let a relation, after fixing all slots outside $2r$ positions to
feasible values, restrict to exactly $r$ equalities whose endpoints lie on opposite sides of a
position cut; on $q$-value slots every automaton for the full relation then has at least $q^r$
states, while the $r$ equalities are width-one factors.  Conversely, a running aggregate has a
small sequential encoding and a wide direct one: $\sum_i x_i\equiv 0 \pmod q$ needs $q$ states,
and all-different over $n$ slots needs a state for each of the $2^n$ sets of used values, while
their direct encodings are single factors of width $L-1$ and $n-1$.
\end{proposition}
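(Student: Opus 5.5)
The whole proposition rests on a fooling-set argument at a position cut, which is the standard lower-bound technique for nondeterministic automata. For (i), fix $n=2k$ and cut after position $k$. I take as fooling set the $4^k$ pairs $(w,w)$ with $w\in\{1,\dots,4\}^k$. Each concatenation $ww$ is accepted. For $w\neq w'$, the string $ww'$ is rejected.

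Take any NFA for the length-$2k$ copy language and an accepting run on each $ww$. Let $s_w$ be the state reached after reading the first $k$ symbols. Suppose $s_w=s_{w'}$ with $w\ne w'$. Splicing the prefix run of $w$ with the suffix run of $w'$ gives an accepting run on $ww'$, which is a contradiction. So the $s_w$ are pairwise distinct, and there are at least $4^k$ states.

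Any sequential encoding is a chain of auxiliary state variables with transition factors, and such a chain is an NFA. Its state variable at the cut must therefore have a domain of size at least $4^k$. The direct encoding is immediate: $k$ factors on disjoint pairs $\{i,k+i\}$, each a $4\times4$ table, give total size $16k=O(k)$. Eliminating one endpoint of a pair touches only its partner, so the induced width is one.

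Part (ii) is the same argument after restriction.
\begin{itemize}
\item Fix the slots outside the $2r$ positions to the given feasible values. This turns any automaton for the full relation into an automaton of no more states for the restricted relation: read the fixed symbols deterministically in place.
\item The restricted relation consists of $r$ equalities, each of the form $x_{a_j}=x_{b_j}$ with $a_j$ left of the cut and $b_j$ right of it.
\item Use the $q^r$ assignments $\alpha$ to the left endpoints as the fooling set. Pair each $\alpha$ with the right assignment that copies it.
\item Two distinct $\alpha$ differ at some $a_j$, so the crossed combination violates equality $j$.
\item The splicing argument then gives $q^r$ distinct states at the cut.
\end{itemize}
The one point needing care is that the fixed outside slots are interleaved with the free ones. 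Splicing still works because the fixed letters are identical in every string of the fooling set.

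For the converse claims:
\begin{itemize}
\item \textbf{Sum modulo $q$.} The DFA tracks the running sum modulo $q$, which takes $q$ states. Its chain encoding has $q\times d\times q$ transition tables. As a direct encoding it is one factor over all $L$ slots, and eliminating any slot leaves the other $L-1$ slots in one bucket, so the width is $L-1$.
\item \textbf{All-different.} The automaton state is the set of values used so far, giving $2^n$ subsets when the domain has $n$ values. The direct single factor has scope $n$ and width $n-1$. If the intended direct encoding is pairwise $\neq$ factors, these form a clique $K_n$, which also has induced width $n-1$.
\end{itemize}

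I expect the main obstacle to be stating precisely what ``sequential encoding'' means, so that (i) and (ii) apply to it and not only to NFAs. My plan is to define it as a chain factor graph whose auxiliary states are functions of the output prefix, or more generally any chain with hard transition factors. Projecting onto the state variables then yields an NFA, and the lower bound transfers to the domain size of the state variable at the cut.
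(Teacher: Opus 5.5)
Your proposal is correct and follows the paper's proof essentially step for step: the fooling set $\{(w,w)\}$ with splicing of accepting runs at the cut gives (i), and the same set, taken after fixing the outside slots, gives (ii); the matching of $q\times q$ equality tables gives width one, and the running-sum and used-value-set automata are set against single wide factors (or the pairwise clique) for the converse claims. You also spell out two points the paper leaves implicit: the transfer from automaton state counts to the domain of the chain's state variable at the cut, and why splicing survives the interleaved fixed slots. The restricted-automaton construction in your first bullet of (ii) is unnecessary, because applying the fooling set directly to the full strings, as you also do, already suffices.
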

Part (i), a fooling-set argument~\citep{glaister1996fooling} that covers the nondeterministic
automata of the closest prior decoder~\citep{dang2026automata}, separates the encoding classes:
for copy, no sequential encoding is small.  Part (ii) shows that the cheap
encoding differs by constraint, since sequential state counts references that cross a cut while
induced width counts local coupling.  The encoding is chosen per constraint family when the graph
is declared and priced by the compiler before allocation: copy and the JSON relations are direct,
schema syntax and termination are sequential, and all-different in trip planning is where the
direct encoding exceeds the budget (\S\ref{sec:cost}).  Two further results extend exact decoding
to open vocabularies and to outputs whose length the model chooses (proofs and numerical checks
in Appendix~\ref{app:copy-proof}).
\begin{proposition}[Class quotient]
\label{prop:quotient}
If every factor is constant on each class of a partition of the alphabet, then exact MAP, the
partition function, and exact sampling over the full alphabet equal exact inference on the
class-level graph with per-class reduced unaries, followed by expansion within each class.
\end{proposition}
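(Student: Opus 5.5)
The plan is to write each slot's value as a pair (class, member) and show that the full-alphabet score factorizes through the class-level graph. Fix a partition of each domain $\mathcal X_i$ into classes $\{C_{i,c}\}_c$, and let $\pi_i:\mathcal X_i\to\{c\}$ map a value to its class. The hypothesis says that every factor satisfies $f_a(x_{S_a})=g_a(\pi(x_{S_a}))$ for some class-level factor $g_a$. Auxiliary variables fit the same pattern, with each treated as a single class of its own. Equation~\eqref{eq:factor-posterior} then becomes
\[
\sum_i u_i(x_i)+\sum_a g_a(c_{S_a}),\qquad c=\pi(x),
\]
where only the unary term depends on the within-class choice. The key is to define the reduced unaries twice, once for each semiring. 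For sampling and $Z$, set $\bar u_i(c)=\log\sum_{v\in C_{i,c}}e^{u_i(v)}$. For MAP, set $\hat u_i(c)=\max_{v\in C_{i,c}}u_i(v)$, recording the maximizing member $v_i^*(c)$.

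For the partition function, split the sum over $x$ into an outer sum over class assignments $c$ and an inner sum over members. Given $c$, the factors are constant and the inner sum factorizes over slots, since the unaries are separable. This gives
\[
Z=\sum_c \exp\Bigl\{\sum_i\bar u_i(c_i)+\sum_a g_a(c_{S_a})\Bigr\},
\]
which is exactly the class-level partition function. MAP follows by the same argument with $\max$ in place of $\sum$. The best $x$ with a given class assignment $c$ is obtained by choosing $v_i^*(c_i)$ independently in each slot. So $\max_x$ equals the class-level MAP under $\hat u$, and expanding that MAP with $v_i^*$ gives a full-alphabet maximizer.

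For sampling, write $p_F(x)=p(c)\prod_i p(x_i\mid c_i)$. Here $p(c)$ is the class-level distribution with unaries $\bar u$, and $p(x_i\mid c_i)\propto e^{u_i(x_i)}$ restricted to $C_{i,c_i}$. This decomposition follows from the factorization above, dividing through by $Z$. Consequently, an exact class-level sample followed by an independent within-class softmax draw in each slot is an exact sample of Equation~\eqref{eq:factor-posterior}. Clamped evidence on committed slots is a unary that is $-\infty$ off one value, so it is absorbed into $u_i$ and the argument is unchanged.

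The main obstacle is interpretive rather than computational. It lies in the phrase ``per-class reduced unaries'': the statement only holds if the reduction matches the inference task, namely log-sum-exp for $Z$ and sampling and max for MAP. I would state this explicitly in the proof. I would also check two edge cases. The first is empty classes, which should be dropped or assigned $-\infty$. The second is auxiliary variables, which must be functions of the classes so that the hypothesis covers transition factors. With that settled, everything reduces to distributivity of $\sum$ or $\max$ over the slot-separable unary product.
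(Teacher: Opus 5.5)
Your proposal is correct and follows essentially the same route as the paper's proof. Both fix the class assignment and use that every factor is constant there, so the inner maximum or sum factorizes over slots into per-class max or log-sum-exp reductions; this yields $Z$, the MAP with within-class argmax expansion, and the decomposition $p(c)\prod_i p(x_i\mid c_i)$ for sampling, and your explicit pairing of each reduction with its inference task and your remarks on clamped evidence and auxiliary variables only spell out what the paper leaves implicit.
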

\begin{proposition}[Length on a padded grid]
\label{prop:termination}
Consider templates that differ by inserting one free-text slot at position $p$ before a fixed
suffix, padded to a common length, and let $u_p(\square)$ be the unary of the suffix's first
symbol, the closing delimiter, at slot $p$.  The grid score of the longer template minus that of
the shorter equals the local margin $r_p-u_p(\square)$ between the reduced free-text score and
the delimiter at $p$, plus the net change of the remaining suffix unaries under the one-position
shift, minus the padding unary at the vacated tail slot.
\end{proposition}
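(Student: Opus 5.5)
The plan is to write both grid scores explicitly and subtract them slot by slot. Fix the padded length $L$. Let the shorter template occupy slots $1,\ldots,p-1$ with its prefix, put the fixed suffix $s_1 s_2\cdots s_m$ at slots $p,\ldots,p+m-1$ with $s_1=\square$, and fill slots $p+m,\ldots,L$ with padding. The longer template agrees on slots $1,\ldots,p-1$. It places the free-text slot at $p$, the suffix at slots $p+1,\ldots,p+m$, and padding at slots $p+m+1,\ldots,L$.

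The grid score of an assignment is the objective of Equation~\eqref{eq:factor-posterior} evaluated on it: the sum of unaries plus the factor terms. The first step is to check that the hard factors contribute zero to both templates. Both are accepted by the schema automaton, and its auxiliary states are functions of the slots, so each output has a unique satisfying extension. Only unaries then remain.

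The free-text slot can take many vocabulary values. I would use Proposition~\ref{prop:quotient} to collapse its content class to a single class-level value. Its score $r_p$ is the reduced unary: the log-sum-exp over the class for sum-reduction, or the class maximum for MAP. Conditioning on the class label then leaves the slot-$p$ contribution well defined as $r_p$.

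Next I subtract term by term:
\begin{itemize}
\item Slots $1,\ldots,p-1$ are identical in both templates and cancel.
\item At slot $p$ the difference is $r_p-u_p(\square)$, which is the local margin.
\item For $j=2,\ldots,m$ the longer template contributes $u_{p+j-1}(s_j)$ and the shorter contributes $u_{p+j-1}(s_j)$ at one slot earlier. Pairing each suffix symbol with its own shift gives $\sum_{j=2}^{m}\bigl[u_{p+j}(s_j)-u_{p+j-1}(s_j)\bigr]$, together with the boundary term for $s_1=\square$ moving to $p+1$.
\end{itemize}
I would fold that boundary term $u_{p+1}(\square)$ into the net-shift sum by indexing the shift over the whole suffix after slot $p$.

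What remains is the tail. The longer template's suffix ends at slot $p+m$, and the shorter template pads that slot. This leaves $-u_{p+m}(\mathrm{pad})$, the padding unary at the slot the shorter template fills and the longer one vacates of padding. All later padding slots coincide and cancel.

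The main obstacle is this bookkeeping: matching each $u$ term to exactly one of the three named groups. The statement's ``net change of the remaining suffix unaries under the one-position shift'' must absorb the displaced delimiter, so the local margin uses $u_p(\square)$ while $u_{p+1}(\square)$ lands in the shift sum. I would fix the indexing convention to make this hold exactly, then confirm with a small numerical check, as the appendix does.
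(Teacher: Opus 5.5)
Your proposal is correct and follows the paper's proof: you subtract the two unary sums slot by slot and cancel the shared prefix, and you split $u_p(\square)$ from the shorter template's suffix sum to form the local margin while $u_{p+1}(\square)$ stays in the net-shift sum, leaving $-u_{p+m}(\mathrm{pad})$ at the tail. Your explicit checks that the hard factors vanish and that $r_p$ is the class-reduced unary are implicit in the paper. One slip to fix: in your third bullet the longer template contributes $u_{p+j}(s_j)$, not $u_{p+j-1}(s_j)$, although the sum you write next is the right one.
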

The local margin is the stop-versus-continue decision a sequential decoder makes; the grid adds
terms from positions after the span and evaluates all of them as mean-field scores under the
masked context.  We therefore decide termination sequentially, at one position with the committed
context (\S\ref{sec:support}).  Termination itself is a running aggregate, and
Proposition~\ref{prop:copy}(ii) prices it: cheap as sequential state, wide as a direct factor.

\section{Experimental Setup}
\label{sec:setup}

\paragraph{Tasks.}
We evaluate nine relational benchmarks over six constraint classes; Appendix~\ref{app:protocol}
gives prompts, factors, and full protocols.  Three are standard puzzle and graph tasks: the 900
official $4\times4$ Sudoku puzzles of \citet{dang2026automata}, with all-different on rows,
columns, and boxes; the 447 GRAM graph-coloring test graphs, with an inequality per
edge~\citep{baek2026gram}; and the 1{,}483 GRAM $N$-queens boards, of which 110 exceed the entry
budget (\S\ref{sec:cost}).  Two come from NATURAL PLAN~\citep{zheng2024naturalplan}: in meeting
planning (298 records, three to five people), temporal and travel factors with an explicit
\textsc{skip} admit many valid schedules that differ in utility, and in trip planning, flights and
date windows define an exact finite support over the 216 of 800 records whose reference
itinerary the parser recovers.  The other four are constructed with exact model-free floors.
Relational Countdown has one succeeding expression among 96 structural assignments, and its
factors cover slot types and distinct numbers, so success requires arithmetic.  The three JSON
tasks form a progression: referential JSON compiles a schema automaton with relational factors,
variable-length JSON adds two-token identifiers and records of 77 to 104 tokens, and
open-vocabulary JSON adds a note of instructed free text over the model's entire vocabulary.
Two further studies use the same machinery: same-order copy over MMLU
questions~\citep{hendrycks2021mmlu} with growing block length $k$ supplies the scaling study, and
BFCL-derived function-call workflows test relational consistency among supplied candidate calls
under the official executable checker.

\paragraph{Models and hardware.}
We use Dream-7B-Instruct~\citep{ye2025dream} and LLaDA-8B-Instruct~\citep{nie2025llada} as
released (Base checkpoints on the shared Sudoku protocol), and repeat every Instruct task
unchanged on the preference-optimized LLaDA-1.5.  All runs use BF16 on one 48~GB NVIDIA RTX
A6000; CPU timings use a dual-socket Intel Xeon Gold 6338 host (64 cores at 2.0~GHz).

\paragraph{Baselines.}
\emph{Independent decoding} keeps the backbone, prompt, and slot layout and removes projection.
On Sudoku we also run the released CFG~\citep{mundler2025cfg}, EPIC~\citep{jin2026epic}, and
LAVE~\citep{zhang2026lave} decoders on the identical puzzles and budget, a local exact DFA, and
CP-SAT with eight parallel workers; where a task admits model-free solutions we report uniform,
random, and position-shuffled unary controls (\S\ref{sec:audits}).

\paragraph{Reporting.}
Latency covers model calls and constraint inference with device-synchronized timing; paired
comparisons use exact McNemar tests and 20{,}000-resample bootstrap intervals; every protocol and
success criterion was fixed before the models ran, and held-out splits ran once
(Appendix~\ref{app:provenance}).

\section{Results}

Validity is the fraction of outputs that satisfy every declared factor, and overhead is relative
to unconstrained decoding of the same prompt; the subsections ask where exact support holds, what
it costs, what standard benchmarks measure, and how to extract the model's preference.

\begin{table}[t]
\centering
\footnotesize
\setlength{\tabcolsep}{3.8pt}
\renewcommand{\arraystretch}{0.92}
\caption{\textbf{Task accuracy (\%) with projection (\textsc{Ours}) and best-of-8 selection
(\textsc{+Sel}) against unconstrained decoding (\textsc{Base}).}  Bold: best per cell; dashes:
outside the protocol.  $^{\dagger}$Unique completion; the last two rows also
admit model-free shortcuts (\S\ref{sec:audits}).  Floors: $^{\ddagger}$chance 1.0\%, template
13.2\%; $^{\S}$referential chance 0.09\%, template 1.6\%.  Meeting: the three-person split;
$N$-queens: 1,373 of 1,483 boards run, 110 refused before decoding (\S\ref{sec:cost}).
\textsc{+Sel} decodes $K{+}1$ times and scores every rendered token (cost in \S\ref{sec:select}).}
\label{tab:main}
\begin{tabular}{cll rrr rrr rrr}
\toprule
& & & \multicolumn{3}{c}{Dream-7B} & \multicolumn{3}{c}{LLaDA-8B} & \multicolumn{3}{c}{LLaDA-1.5} \\
\cmidrule(lr){4-6} \cmidrule(lr){7-9} \cmidrule(lr){10-12}
& Task & Metric & Base & Ours & +Sel & Base & Ours & +Sel & Base & Ours & +Sel \\
\midrule
\multirow{2}{*}{\rotatebox[origin=c]{90}{\tiny PLAN}}
 & Meeting (3 people) & norm.\ objective & 42.8 & 55.8 & \textbf{75.3} & 62.0 & 83.5 & \textbf{99.0} & 65.0 & 87.7 & \textbf{98.8} \\
 & Trip     & exact match      & 19.9 & \textbf{92.1} & 82.9 &  0.0 & \textbf{84.7} & 72.2 &  0.0 & \textbf{78.7} & 72.2 \\
\midrule
\rotatebox[origin=c]{90}{\tiny MATH}
 & Countdown$^{\ddagger}$ & success & 0.0 & 1.4 & \textbf{3.0} & 0.0 & 1.0 & \textbf{9.4} & 0.0 & 0.8 & \textbf{8.6} \\
\midrule
\multirow{3}{*}{\rotatebox[origin=c]{90}{\tiny JSON}}
 & JSON$^{\S}$ & instructed record & 0.0 & 0.0 & 0.0 & 0.0 & 41.4 & \textbf{92.0} & 0.0 & 43.4 & \textbf{96.4} \\
 & Var-JSON$^{\S}$ & instructed record & 0.0 & 0.0 & \textbf{0.8} & 0.0 & 38.3 & \textbf{41.7} & 0.0 & \textbf{37.9} & 35.4 \\
 & Open-JSON$^{\S}$ & instructed record & 0.0 & \textbf{22.9} & --- & 0.0 & \textbf{26.3} & --- & 0.0 & \textbf{30.0} & --- \\
\midrule
\rotatebox[origin=c]{90}{\tiny PUZZLE}
 & Sudoku$^{\dagger}$ & exact match & 26.9 & \textbf{100.0} & --- & 33.7 & \textbf{100.0} & --- & --- & --- & --- \\
\midrule
\multirow{2}{*}{\rotatebox[origin=c]{90}{\tiny AUDIT}}
 & Coloring & valid coloring &  2.7 & \textbf{100.0} & --- &  0.9 & \textbf{100.0} & --- & 0.2 & \textbf{100.0} & --- \\
 & $N$-queens & valid placement &  0.0 & \textbf{100.0} & --- &  0.0 & \textbf{100.0} & --- & --- & --- & --- \\
\bottomrule
\end{tabular}
\end{table}

\subsection{Exact relational support holds where sequential state explodes}
\label{sec:support}

Validity under projection is guaranteed, so the 100\% column of Table~\ref{tab:main} checks the
implementation on 4{,}516 executed instances for Dream and LLaDA-8B (2{,}243 for LLaDA-1.5, which
skips the two Base-checkpoint tasks), and what the table measures is the gap projection closes:
independent decoding is 0.0--79.0\% valid across nine benchmarks, and exactly 0\% on GRAM
$N$-queens and LLaDA trip planning.

Same-order copy has width one and a 16-entry peak table at every $k$
(Figure~\ref{fig:overview}b), and FactorDLM stays 100\% valid through $k=32$, while the
sequential bound of Proposition~\ref{prop:copy} binds at practical sizes: at $k=10$ the official
CFG engine needs 44.0~MB of grammar, 4.76~s to compile, 8.24~s per query, and 4.11~GiB
(Table~\ref{tab:cfg}).

On the shared 900-puzzle Sudoku protocol, FactorDLM, official LAVE, and our local exact DFA all
return the gold completion on 900/900 for both backbones and decode modes, against
26.89--33.67\% for the matched format+clue control (Figure~\ref{fig:sudoku-sota-quality}); each
grammar decoder receives a single-grid per-puzzle grammar, which favors the grammar engines
(Appendix~\ref{app:protocol}).

Coverage extends beyond equality matching.  On all 447 public GRAM test graphs, independent
decoding is valid on 2.68\% and 0.89\% of graphs (Table~\ref{tab:gram}).  On trip planning, exact
conditioning recovers the reference itinerary on 92.1\% and 84.7\% of records, against 19.9\% and
0.0\% unconstrained, whose valid and correct rates coincide, so the constraints account for the
entire gap.  On function-call workflows, where each call is selected among supplied candidates
that include the gold call, format-only decoding keeps every call well typed yet executes on
1.6\% and 0.8\% of records under the official checker, while relational projection executes on
16.1\% (Table~\ref{tab:bfcl-corrected}), so the failures of format-only decoding are relational.

\paragraph{Syntax and relations in one exact decoder.}
Compiling a schema automaton and relational factors into one graph enforces both at once.
\emph{Referential JSON} asks for a record that declares two users and an action that references
them: the automaton becomes a transition chain, and typed slots, distinct identifiers, reference
resolution, and a role permission become factors at induced width 5, so one forward pass yields
the exact top-8 valid records (Figure~\ref{fig:overview}d) and the plan is 100\% valid on both
axes for every backbone.  \emph{Variable-length JSON} removes the fixed grid: records hold two or
three users with two-token identifiers, valid outputs span 77 to 104 tokens, references resolve
across spans, and the model's own end-of-text token decides termination.  The automaton alone
keeps every record well formed yet emits the wrong number of users on 27--51\% of records and
leaves a broken reference on 64--73\%, while the combined plan is exactly valid at every emitted
length and reaches the instructed record on 35.4--41.7\% for the LLaDA family
(Appendix~\ref{app:protocol}).  \emph{Open-vocabulary JSON} removes the fixed alphabet: the note
field holds instructed free text over the model's entire vocabulary, and
Proposition~\ref{prop:quotient} keeps inference exact.  When the decoder commits the structural
slots before the free text, every backbone copies the instructed phrase on all 240 records at
100\% validity, whereas a single-pass MAP copies it on 0 of 240, consistent with
Proposition~\ref{prop:termination}.  When termination is decided sequentially, the model also
chooses the correct span length on 98.3--99.6\% of records and reaches the instructed record on
22.9--30.0\% (Table~\ref{tab:main}).  These tasks show composition, and we checked the
separation too: the minimal product automaton for schema plus references has 3,021 states on
referential JSON and at least 23,409 on variable-length JSON (a Myhill--Nerode count of declared
identifier-role sets), against hybrid peak tables of 864 and 7.6M entries.  A product automaton
is a practical alternative here; the hybrid plan's gain is that schema and relations are declared
separately and compile together, and the exponential separation is the copy result.

\subsection{Exactness adds at most 6.9\% and stops exactly where declared}
\label{sec:cost}

Relative to unconstrained decoding, exact projection adds 0.41--6.86\% latency across tasks, and
in a balanced 640-output profile, model kernels account for over 98.6\% of latency; this is
projection alone, excluding trip planning's support enumeration and the scoring in
\S\ref{sec:select}.  On matched
Sudoku cells FactorDLM decodes $2.5\times$ faster than official LAVE, the one released system
also at 900/900, and a specialized exact DFA is 1.6--1.8\% faster still, the cost of generality
when a chain suffices.

For repeated queries over a fixed topology, the decoding workload, we compare against CP-SAT on
the eight cells where it proves all 20 changing-unary queries optimal.  All 320 objectives agree
exactly, and compiled CUDA projection is $13.6\times$ faster than CP-SAT with eight parallel
workers, the CPU engine $43.1\times$ (Table~\ref{tab:solver-scaling}); both timings cover the
solve call only, and CP-SAT solves each changed objective from the start.  Rebuilding the
FactorDLM plan per query adds 9--36\%, so the margin comes from variable elimination, and with
both solvers building their model per query FactorDLM remains $7.1\times$ faster.

The width boundary also decides the encoding, and all-different is where the direct one fails:
trip planning couples every position through a permutation, and its direct encoding, verified to
have the same support on all 216 records, runs at width 4--6 up to six cities and exceeds the
budget beyond, as Proposition~\ref{prop:copy}(ii) predicts, so it is declared as an
explicit-support factor listing the feasible itineraries (median 5, at most 108).

Cost grows exponentially with induced width, and Figure~\ref{fig:systems-evidence} reports where
our system stops: of the 140 matrix cells, FactorDLM executes 84 and rejects 56 before numerical
inference, and on GRAM $N$-queens it refuses 110 of the 1,483 boards, whose peak tables would
reach $10^8$--$10^9$ entries; CUDA wins on the width-nine Sudoku graph and the CPU on tiny tables
(Figure~\ref{fig:device-validation}).

\begin{figure}[t]
  \centering
  \includegraphics[width=\linewidth]{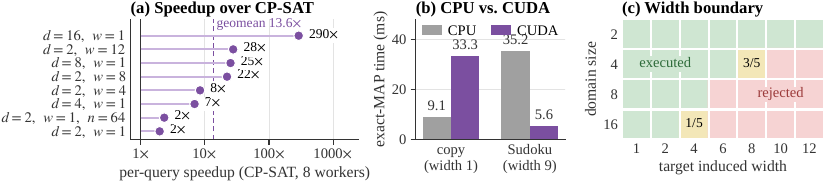}
  \caption{\textbf{Solver comparison and the width boundary.}  (a) Per-query speedup of compiled
  CUDA projection over CP-SAT with eight workers on the eight fully optimal cells ($d$: domain
  size, $w$: induced width, $n=16$ unless shown).  (b) Exact-MAP latency and the CPU/CUDA
  crossover.  (c) Executed and budget-rejected cells.}
  \label{fig:systems-evidence}
\end{figure}

\subsection{Model-free rules solve three of five standard benchmarks}
\label{sec:audits}

\begin{figure}[t]
\centering
\includegraphics[width=0.73\linewidth]{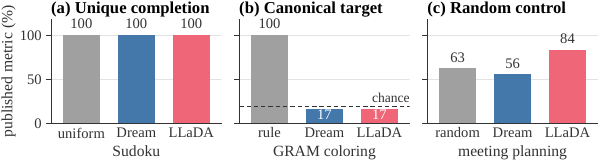}
\caption{\textbf{Three shortcut classes on the published metrics.}  (a) Unique completion:
uniform potentials tie every model at 100.  (b) Canonical targets: a two-line rule hits 309/309
and the $N$-queens target is the lexicographic minimum (405/405); every dLLM method sits below
chance.  (c) The published uniform-MAP control scores zero by tie-break and is omitted; a random
control beats Dream (Table~\ref{tab:audits}).}
\label{fig:shortcuts}
\end{figure}

Exact projection removes constraint violations as a confound, so we can ask what a benchmark
measures.  We test the five standard benchmarks (Sudoku, GRAM coloring and $N$-queens, NATURAL
PLAN meeting and trip planning) for \emph{shortcuts}: model-free rules, canonical targets, or
degenerate baselines that reach the published metric with zero model input.
Three of the five have one (Figure~\ref{fig:shortcuts}).  Sudoku's rules and clues determine a
unique completion, so the constraints solve it with uniform potentials, whereas the weaker
format-plus-clue constraints leave 27--34\% to the model; both GRAM targets are canonical
constructions that a two-line routine reproduces.  On meeting planning the published control is
degenerate; against a proper random control, LLaDA gains about 21 points and Dream falls below
it.

The compact-label interface is one possible cause: on 468 single-digit Countdown instances with
the same constraints and scorer, decoding surface expression tokens is significantly \emph{worse}
than decoding compact labels on all three backbones (paired $p<10^{-3}$), and both stay below the
template floor, so the weak arithmetic is the models'.

\subsection{Selection recovers preference that greedy projection misses}
\label{sec:select}

On meeting planning the factors encode feasibility and the prompt holds the utility, so the
model's choice among valid schedules is what we measure: we draw $K=8$ exact constrained samples,
render each as prose (``\emph{meet Alice at 2:00PM, skip Bob}''), and keep the best under masked
pseudo-likelihood (PLL) or a list-reading judge.

\begin{table}[t]
\centering
\scriptsize
\renewcommand{\arraystretch}{0.9}
\caption{\textbf{PLL selection beats greedy projection in all nine cells}, significantly in six
after Holm correction (exact McNemar).  \emph{Oracle}: best sampled candidate; \emph{Top-8}: PLL
over the exact top-8.}
\label{tab:rerank}
\begin{tabular}{llrrrrrr}
\toprule
& & & \multicolumn{3}{c}{Exact-optimum rate} & & \\
\cmidrule(lr){4-6}
Backbone & People & Greedy & PLL & Judge & Top-8 & Oracle & Normalized objective \\
\midrule
Dream & 3 & 0.120 & 0.490 & 0.320 & \textbf{0.920} & 0.500 & $0.558\to0.753$ \\
Dream & 4 & 0.040 & 0.290 & 0.270 & \textbf{0.640} & 0.400 & $0.578\to0.754$ \\
Dream & 5 & 0.051 & 0.173 & 0.143 & \textbf{0.286} & 0.276 & $0.533\to0.688$ \\
\midrule
LLaDA & 3 & 0.530 & 0.970 & 0.970 & \textbf{0.980} & 0.970 & $0.835\to0.990$ \\
LLaDA & 4 & 0.410 & 0.570 & \textbf{0.730} & 0.450 & 0.750 & $0.711\to0.852$ \\
LLaDA & 5 & 0.133 & 0.214 & \textbf{0.296} & 0.133 & 0.367 & $0.567\to0.698$ \\
\midrule
LLaDA-1.5 & 3 & 0.650 & \textbf{0.970} & 0.950 & 0.940 & 0.980 & $0.877\to0.988$ \\
LLaDA-1.5 & 4 & 0.450 & 0.510 & \textbf{0.730} & 0.500 & 0.770 & $0.753\to0.843$ \\
LLaDA-1.5 & 5 & 0.143 & 0.224 & \textbf{0.337} & 0.245 & 0.398 & $0.604\to0.710$ \\
\bottomrule
\end{tabular}
\end{table}

\textbf{PLL selection beats greedy decoding in every cell} of Table~\ref{tab:rerank}, and the
normalized objective improves everywhere.  Testing the declared scorer alone, exact paired McNemar
tests with a Holm correction over the nine cells leave six significant; the other three, LLaDA at
five people and LLaDA-1.5 at four and five, have adjusted $p\ge0.12$.  On LLaDA with three
people selection reaches $0.990$ against $1.000$ for a solver given the objective, and LLaDA-1.5
replicates the cell ($0.650\to0.970$).  The gain
requires the scorer: greedy is the per-step MAP of the constrained mean-field prediction, a
single sample is \emph{worse} than greedy on both four-person cells, and the reranker stays
within one record of the oracle at every budget (Figure~\ref{fig:frontier}).  Because the judge repeats its choice on 0.43--0.78 of trials under
ten seeded permutations of the candidate list, PLL is the default scorer
(Table~\ref{tab:judge-order}).

Slot-level and candidate-level evidence diverge as the problem grows: at five people the model's
unary potentials are significantly \emph{worse} than uniform ones
(Table~\ref{tab:natural-plan-extended}), yet scoring complete candidates still beats the
constrained argmax.  Candidates are both rendered as text and scored as whole sequences, so
which change carries the gain is open.  Trip planning marks the scope of selection: its oracle
lies within 4.2--10.7 points of greedy, so reranking loses there.

\paragraph{An exact pool from one forward pass.}
Lawler partitioning over the compiled plan enumerates the exact top-$K$ valid assignments of the
first-step distribution, so eight candidates cost one forward pass in place of 27.  Scored
identically (Top-8 in Table~\ref{tab:rerank}), this pool is better in three of nine cells,
indistinguishable in five, and worse in one, and it helps Dream most ($0.490\to0.920$ at three
people, against a sampled-pool oracle of 0.500).  The pools differ because composed multi-step
draws drift from the first-step distribution, to total variation 0.54--0.63 on Countdown against
a sampling floor of 0.13--0.22 (Appendix~\ref{app:protocol}).

\paragraph{What selection costs and what it can be attributed to.}
Greedy projection on meeting planning uses three forward passes and the sampled pool 27; PLL
adds one masked forward pass per rendered token per candidate, on the order of one hundred per
record, so the \textsc{+Sel} columns of Table~\ref{tab:main} cost over an order of magnitude
more model computation than greedy, which we did not time end to end (on referential JSON: one
pass for the pool, $8\times73=584$ for PLL).  Attribution differs by task: a uniform pool of
eight feasible records is correct with probability at most $8\cdot2/2{,}160=0.7\%$, so the
92--96\% JSON result reflects the model's proposals, whereas on meeting planning, with supports
of at most 57 assignments, the uniform-pool control with the same scorer was not run.

\begin{table}[t]
\begin{minipage}[b]{0.47\linewidth}
\centering
\scriptsize
\setlength{\tabcolsep}{3pt}
\renewcommand{\arraystretch}{0.85}
\begin{tabular}{lrrr}
\toprule
 & Dream & LLaDA & LLaDA-1.5 \\
\midrule
\emph{Chance floor}        & \emph{0.010} & \emph{0.010} & \emph{0.010} \\
\emph{Best fixed template} & \emph{0.132} & \emph{0.132} & \emph{0.132} \\
\midrule
Unconstrained              & 0.000 & 0.000 & 0.000 \\
Greedy (MAP)               & 0.014 & 0.010 & 0.008 \\
Rerank, PLL                & \textbf{0.030} & \textbf{0.094} & \textbf{0.086} \\
\midrule
Best-of-8 oracle           & 0.066 & 0.160 & 0.134 \\
\bottomrule
\end{tabular}
\caption{\textbf{Relational Countdown success} over 500 instances; both floors are analytic.}
\label{tab:countdown}
\end{minipage}\hfill
\begin{minipage}[b]{0.49\linewidth}
\centering
\includegraphics[width=\linewidth]{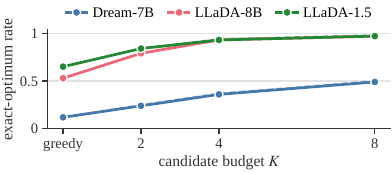}
\captionof{figure}{\textbf{Selection tracks the best-of-$K$ oracle at every budget} (solid:
PLL; dashed: oracle).}
\label{fig:frontier}
\end{minipage}
\end{table}

\paragraph{Benchmarks where preference is necessary and measurable.}
\emph{Relational Countdown} asks for a left-to-right arithmetic expression over three given
numbers, after \citet{dang2026automata}, with both floors analytic over its 96 assignments.  Constrained greedy sits at the chance floor
on every backbone (Table~\ref{tab:countdown}), and selection extracts some arithmetic, yet all
three backbones stay below the template floor.  On referential JSON, selection reaches $0.920$
and $0.964$ against floors of $0.0009$ and $0.016$, so two backbones track references far above
both floors while all of them fail at arithmetic.

\vspace{-4pt}
\section{Conclusion}

FactorDLM decodes diffusion language models under relational constraints by keeping the
constraints as a factor graph and projecting each mean-field prediction onto it exactly.  Its
cost grows with induced width and a sequential encoding's with the state that crosses a position
cut; both are computable before decoding, so each constraint is declared in its cheap encoding
and a schema automaton compiles together with relational factors in one plan that guarantees
validity at under 7\% projection overhead.  Dense constraints such as large $N$-queens boards
exceed the width budget and are refused, and free-text spans are capped at three tokens;
circuits or approximate inference beyond the declared width, and longer spans with sequential
termination, are the next steps.

\bibliographystyle{iclr2027_conference}
\bibliography{references}

\subsection*{AI use statement}

We used generative AI tools to assist with writing: polishing wording and grammar, improving
clarity, and proofreading the manuscript.  The authors reviewed every edit and take full
responsibility for the paper.

\subsection*{Reproducibility statement}

\ifarxiv
The public code release contains the source code, tests, and configurations that run every
experiment and analysis.
\else
The anonymous supplement contains the source code, tests, and configurations that run every
experiment and analysis.
\fi
Section~4 and Appendices~A--D specify the protocol, provenance, full
matrices, nonsignificant comparisons, and failure boundaries.

\appendix
\section{Verification and provenance}
\label{app:provenance}

\paragraph{Exactness validation.}
The inference core is validated against exhaustive enumeration: log partition functions, MAP
assignments, exact top-$k$ sequences, single-variable and component marginals, and empirical
sampling frequencies all match brute force on every tested graph, with numerical error below
$4\times10^{-15}$, and one million constrained draws contain no violation.  The
hybrid compiler is validated the same way: feasibility under the compiled graph equals automaton
acceptance conjoined with the declared relations on every string of the test languages, and
domain propagation leaves the support unchanged.  A CPU log-space engine serves as an
independent numerical reference for the CUDA path, and all 400 cross-device comparisons agree
exactly.  Malformed inputs, impossible evidence, and over-budget constraint graphs raise errors
and the decoder stops there.

\paragraph{Pinned inputs and baselines.}
Every model checkpoint, dataset revision, and external baseline repository is pinned to an exact
revision in the released configurations and baseline lock.  The released CFG, EPIC, and LAVE
baselines are reconstructed from their official repositories at fixed commits, and their upstream
test suites pass in our environments.

\paragraph{Result provenance.}
Every protocol and success criterion was fixed before the corresponding models ran, and held-out
splits were evaluated once.  Each run publishes an immutable output directory containing its
inputs and per-output observations, from which the analyzers compute every aggregate and paired
test.

\paragraph{Statistical units.}
Paired comparisons operate at the record level: 447 graphs for coloring, 900 puzzles for Sudoku,
all planning records per split, 500 instances each for Countdown and referential JSON, and 240
each for variable-length and open-vocabulary JSON, with
exact McNemar tests for paired
binary outcomes and 20{,}000-resample bootstrap intervals elsewhere; grouped designs collapse
sampling seeds within records before testing.  Nonsignificant comparisons are reported alongside
significant ones.

\paragraph{Compute.}
All measurements come from single-GPU processes on 48~GB accelerators, with device-synchronized
timing that separates model calls from constraint inference and excludes checkpoint loading and
file output.  The complete six-method Sudoku matrix costs 5.9 and 6.9 GPU-hours for the two
backbones; the remaining benchmark suites are smaller.

\paragraph{Released artifacts.}
\ifarxiv
The public code release contains the full source:
\else
The supplementary material contains the full source:
\fi
the exact inference core, the hybrid
syntax-relational compiler, the three constructed benchmarks and their generators, every audit as
a standalone routine, and the commands that run every experiment, analysis, table, and figure
in this paper.

\section{Full experimental protocol}
\label{app:protocol}

This appendix records the per-task protocol in full, including the pinned data and checkpoint revisions.  Section~\ref{sec:setup} summarizes what a reader needs to interpret the results.

\paragraph{Models and hardware.}
We evaluate the official Dream-v0-Instruct-7B and LLaDA-8B-Instruct checkpoints at pinned
revisions; every revision in this appendix is listed in the released configurations.  Dream
logits are
aligned with its previous-position prediction convention; LLaDA logits are read at the masked
position, matching its official generator.  Unknown model types fail instead of selecting a
fallback.  Each run uses BF16 on one NVIDIA RTX A6000 (48~GB, compute capability 8.6) in a host
with two Intel Xeon Gold 6338 processors (64 cores at 2.0~GHz base, 3.2~GHz boost) and 1~TB of
memory; all CPU-side measurements, including every CP-SAT solve and the CPU elimination engine,
run on this host.  CUDA is synchronized immediately before
and after each decoded output.  Latency includes model calls and constraint inference, but excludes
checkpoint loading, tokenization, file output, and plotting.
The shared Sudoku study uses the corresponding Base checkpoints, also pinned.

\paragraph{Methods.}
\emph{Independent} commits unconstrained per-slot argmax proposals by confidence.  \emph{Finite
support} enumerates every valid assignment, normalizes the mean-field product over this support,
and commits by exact posterior confidence.  It is an exact probabilistic control on feasible cases;
it is not the optimized runtime of \citet{dang2026automata}.  A declared 100,000
state cap stops compilation before model execution.  \emph{Individual factor} ranks singleton
commitments with exact constrained marginals.  \emph{Components model/factor/random} keep the true
  relation groups but change their ranking signal; \emph{matched random groups} preserves only the
  group-size multiset.  \emph{Dispersed model}, the revised full system, preserves the same atomic
  batch sizes but separates factor-dependent variables across batches and ranks batches by model
  confidence.  Dispersed factor changes only the ranking signal.  Static and residual separators are
  connected-graph controls.

We also compose the published SOAR position-search rule~\citep{cao2026soar} with the exact finite
support.  The independent implementation uses the official defaults: confidence threshold 0.90
for Dream and 0.95 for LLaDA, beam size two, and at most five parallel commitments.  The official
repository has no root license file, so we do not copy its source.  This comparison was committed
before its outputs were inspected, but after the original finite-support held-out result; we label
it as a stronger added baseline rather than the original primary test.

\paragraph{GRAM graph coloring.}
We use all 255 eight-vertex and 192 ten-vertex graphs in the public GRAM test split at its pinned
revision~\citep{baek2026gram}.  The prompt lists graph edges and requests one of three
colors per vertex.  Pairwise inequality factors encode edges.  We run two denoising steps, selected
in the preceding 24-graph feasibility study.  Metrics are valid coloring, exact target partition up
to global color renaming, best-permutation vertex agreement, and latency.  Since many graphs have
multiple valid colorings, validity and agreement are both reported.

\paragraph{GRAM $N$-queens completion.}
The public GRAM test split supplies 1,483 partial boards over the $8\times8$ and $10\times10$
sizes.  Each free row is one variable whose domain is the board columns; clue rows fold into hard
unary factors, and pairwise factors forbid shared columns and diagonals, so a board with $k$ free
queens has induced width $k-1$.  The compiler's declared entry budget rejects, before any
numerical inference, exactly the $10\times10$ boards with eight or nine free queens: 110
instances whose symbolic peak tables reach $10^{8}$ and $10^{9}$ entries.  The remaining 1,373
instances per backbone execute under all six method-and-mode conditions (8,238 of the 8,348
declared rows; each rejection is recorded once), and every validity figure in the paper uses the
executed set as its denominator.  Encoding correctness was verified before execution: the factor
graph's exact log partition reproduces GRAM's independently computed solution counts on 120
records across both board sizes.

\paragraph{MMLU same-order copy.}
Each prompt contains public four-choice MMLU questions and requests two identical output blocks.
The hard constraint sees only $x_i=x_{k+i}$, never the answer key, and surface labels are independently
permuted by a fixed hash.  The 240-question overlap and 1,200-question subject-disjoint causal
matrix measure validity, answer accuracy, pairwise task success, and latency.  The task measures
relational semantic preservation over MMLU questions~\citep{hendrycks2021mmlu}, under its own
protocol rather than the MMLU benchmark's.

\paragraph{Causal attribution matrix.}
The unary block compares model, uniform, random, shuffled-position, and oracle-biased logits.  The
schedule block fixes the exact projection step and compares individual positions, matched random groups,
true components, separators, dispersed batches, and SOAR.  MAP and exact sampling run at 32 steps
with three seeds.  All paired gates were frozen before either held-out output existed.

\paragraph{NATURAL PLAN partial utility.}
We use the official meeting-planning JSON at its pinned commit~\citep{zheng2024naturalplan},
adding an explicit \textsc{skip} value per person so that
factor-valid assignments differ in utility.  Pair factors encode temporal order and directed travel
but not the gold plan.  The 45-cell matrix reports
validity, normalized scheduled-meeting objective, exact optimum, classification, and latency under
model and controlled unary potentials.

\paragraph{NATURAL PLAN trip planning.}
Flight connectivity and date windows are high-arity, so itineraries are held as exact finite
support: an adapter parses flight lists and date windows from the prompt text, enumerates every
feasible itinerary, and the decoder normalizes the mean-field product over that support.  The
evaluated set contains every usable record with four to seven cities: 23, 49, 70, and 74
records, respectively, for a total of 216.  Of the 200 records per size, the parser recovers 44
to 85 and discards every record whose reference itinerary falls outside the reconstructed
support.  All methods run on this identical parseable subset, so method comparisons stay
unbiased, while absolute rates describe the subset and may differ on the full distribution.  Enumerated supports have median 5, mean 11.9, and maximum 108 itineraries;
the chance floor is the mean reciprocal support size, 0.404, 0.319, 0.230, and 0.156 by size and
0.243 pooled.

The factored alternative encodes the same constraint as a graph: position variables hold the
city, a chain of day variables holds the running start day (pinned at day one), ternary factors
advance the chain by the placed city's duration and apply that city's windows, and pairwise
factors enforce all-different and connectivity.  Its support matches the enumerated support on
every evaluated record, counted by exact partition under uniform potentials, and a unit test
checks set equality directly on a three-city instance with and without windows.  All-different
over positions forces induced width $n-1$, so realized peaks are 1,792--36,864 entries at four
cities, 25,000--105,125 at five, and 466,560--1,026,432 at six, against the declared 1,048,576
budget; every seven-city record is rejected before allocation.  The finite-support plan is
therefore the affordable exact encoding at the evaluated sizes, and both encodings condition on
the identical set.

\paragraph{BFCL-derived relational workflows.}
We pin the official Gorilla source and BFCL v4 base multi-turn data at a fixed
commit.  The 124 records have two through eight calls and a repeated scalar literal.
Factors enforce only cross-call consistency, so several plans remain valid; the official checker
measures execution.  We compare independent, one-shot CP-SAT, iterative schedules, and semantic
verification.  Each candidate check receives isolated mutable tool state.  This controlled
derivative is not an official BFCL leaderboard submission.

\paragraph{Official Sudoku shared task.}
We pin the nine 4-by-4 Sudoku JSONL files in the official Dream repository at a fixed commit.
For each clue count 4 through 12, the
first eight records are demonstrations and the remaining 100 are test puzzles, exactly matching the
900-puzzle protocol of \citet{dang2026automata}.  \emph{Format+clue} restricts every
slot to digits 1--4 and keeps input clues fixed, matching the semantics of their 21-state DFA; it
does not enforce Sudoku relations.  \emph{Factor separator} additionally uses 56 pairwise
inequality factors for all rows, columns, and boxes.  Its min-fill width is nine and its declared
limit is 1,048,576 entries.  The exact-DFA control compiles all 288 valid 4-by-4 grids into a
502-state, 656-edge automaton and applies clues as observed evidence; it is an independent exact
implementation, not the unavailable authors' code.  All 900 puzzles run greedy and fixed-seed
exact sampling at the published 32 steps; a fixed 90-puzzle subset (10 per clue count) measures
greedy 4/8/16-step scaling.  We report clue preservation, valid-Sudoku rate under the official
metric, exact match, synchronized decoding latency, CUDA model time, and non-model residual.
Published automaton accuracies are shown separately as a non-executed reference.

\paragraph{Executable CFG-system overlap.}
We also run the locked official CFG decoder~\citep{mundler2025cfg}, LAVE~\citep{zhang2026lave}, and
EPIC~\citep{jin2026epic} on the identical Sudoku cells.  An independent adapter enumerates the 288
rule-valid grids once and filters them using only the observed clues; the reference answer is used
only for scoring.  The resulting per-puzzle finite language is passed through each system's
official grammar interface.  All 900 puzzles have exactly one feasible completion, so
each per-puzzle grammar accepts a single string, the smallest exact grammar for the task, and the
comparison favors the grammar engines; the adapter still records the compatible-language size of
every puzzle and assumes nothing about uniqueness.
The shared overlap emits exactly 16 digit tokens, matching FactorDLM's fixed-slot output budget;
it is not a reproduction of the automaton paper's separate 32-token free-text configuration.
For LAVE, we use its released $N=10$, top-5 proposal, and $\tau=5$ retry settings.  For EPIC, we use
the released lexing-cache, DFA-free, and exact regular-cover configuration.  LAVE's official
generators insert EOS one position after a grammar match, so its adapter allocates one transport-only
EOS slot and excludes it from the 16 scored semantic slots.  Dream-Base has two added model-only
token IDs outside the pinned Qwen checker vocabulary; the adapter masks exactly those
checker-unrepresentable logits, which cannot be grammar terminals.  The comparison uses
both Base backbones, MAP and temperature-0.2 sampling at 32 steps, and the same 90-puzzle MAP sweep
at 4/8/16 steps.  Each raw row separates synchronized decode time, per-instance grammar/checker
setup, their sum, retries, grammar bytes, and feasible-language size.  Static FactorDLM and DFA
plans are compiled once and reused; their per-output setup is therefore zero in the amortized
serving comparison.  Finite-budget completion failure remains a failed output: the adapter never
repairs it or invokes another method, and it retains the raw decoded token string for audit.  This
is the recovery-disabled setting denoted Con.$^{-}$ by the CFG paper and E.$^{-}$ by EPIC; both
papers also report a separate completion-based repair mode.  We therefore compare direct fixed-budget
outputs and do not reinterpret this matrix as superiority over their repaired modes.

\paragraph{Official CFG engine probe.}
We pin the official constrained-diffusion repository~\citep{mundler2025cfg} at a fixed
commit.  Its unmodified Rust grammar engine passes 406 upstream tests with eight
declared skips.  For each $k\in\{2,4,6,8,10\}$, we enumerate the finite same-order-copy language into
a CFG, then measure grammar bytes, compilation, one membership query, and process peak RSS.  This
is a direct engine measurement of the finite restriction, not a claim that enumeration is the only
way to encode an arbitrary finite language.

\paragraph{Evaluation audits.}
Three audits test whether a task metric is fixed by something other than model reasoning, and each
is published as a runnable module.  The Sudoku solver audit replaces
every non-clue unary with a uniform potential and re-solves, establishing whether the constraints
alone determine the answer.  The N-Queens target audit computes the exact row-order
lexicographic-minimum completion by weighted MAP over the same factor graph (weighting free row
$j$ by $-c\cdot n^{k-1-j}$ makes the MAP assignment the lexicographic minimum) and compares it
with the published target on every multi-solution instance inside the table budget.  The planning
audit regresses each method-and-mode cell's mean normalized objective on its mean \textsc{skip}
rate, excluding model-free methods and the degenerate uniform-MAP cell, and reports the residual as
the part of the score not explained by propensity to schedule.

\paragraph{Extended planning protocol.}
We extend the meeting-planning task from three to four and five people using the same eleven
methods, both proposal modes, three replicate seeds, and three commitment steps, so people count is
the only manipulated variable.  Compact labels extend from digits 1--9 to a 61-symbol alphabet
verified single-token on all four checkpoints, with indices 0--8 unchanged so three-person prompts
stay byte-identical.  A declared 1,048,576-candidate budget rejects a record before its feasible set
is enumerated, retaining 100 of 100 four-person and 98 of 100 five-person records.  Gates were
frozen before execution and name their decode mode explicitly.

\paragraph{Statistics.}
MMLU questions share one generated output inside each group, so questions are not independent.
The primary interval resamples whole groups within the $k=4$ and $k=8$ strata for 20,000 bootstrap
draws.  The primary two-sided test flips the sign of each group effect, also with 20,000 fixed-seed
draws in the legacy analysis and 100,000 in the causal revision.  We report question-level exact
McNemar tests only as descriptive supporting analyses.  GRAM uses paired graph-level tests.  Revised
planning and workflow analyses first average stochastic seeds within each record, then use 20,000
paired record bootstraps and 100,000 sign flips.  Sudoku uses exact paired McNemar tests over all 900
puzzles separately for greedy and sampling.  All methods and nonsignificant comparisons remain in
the consolidated CSV.

\paragraph{Controlled solver and width matrix.}
Path-power primal graphs realize every target width in
$n\in\{16,32,64,128,256\}$, $d\in\{2,4,8,16\}$, and
$w\in\{1,2,4,6,8,10,12\}$.  Five warmups precede 20 changing-unary queries.  Compiled NumPy and
float64 CUDA variable elimination are compared with deterministic one-worker OR-Tools CP-SAT on a
fixed 48-cell subgrid.  We report compile/build, repeated projection/solve, one-shot time,
score agreement, peak and aggregate entries, CUDA allocation, conflicts, branches, and every
per-table or aggregate-budget rejection.  Identical query repeats are paired for solver speedups;
timeouts retain their solver status.

\paragraph{Controlled runtime profile.}
Four NATURAL PLAN records are selected from factor metadata to span peak elimination tables of
8, 27, 64, and 125 entries.  For each backbone and method, we run two warmups and 20 measured
repeats per record.  A deterministic rotation balances every method equally across the four ordinal
positions.  CUDA events measure model-kernel time; synchronized wall time minus event time is the
additive non-model residual.  CUDA peak allocation is reset before every output.  A hardware
check requires an RTX A6000 with compute capability 8.6 and at least 48~GB of memory.

\paragraph{Compiled-system validation.}
We run 100 balanced repetitions of CPU and CUDA MAP inference for a 128-variable width-one copy
graph and the 16-variable width-nine Sudoku graph, after 20 warmups.  Assignments and MAP scores
must agree across every trial.  A separate 1,000-trial CPU study compares dynamic topology
construction with a reused compiled plan at 8--128 variables.  Binary cliques of sizes 4 through 24
test the declared 1,048,576-entry allocation boundary.  Every publisher requires a clean Git tree
before and after execution.

\paragraph{Relational Countdown.}
Instances are generated deterministically from seed 20270: three distinct integers in 1..30, the
target chosen as the reachable value with the fewest witnessing expressions (ties toward the
smallest value, targets equal to a given number excluded), accepting a triple only when that best
target has exactly one witnessing assignment.  All 500 released instances therefore have exactly
one succeeding expression, and the chance floor is exactly $1/96$ on every instance; the
released generator enforces this condition, and the enforced generator reproduces the released
instance set exactly.  Evaluation is strictly left to right; intermediates must be positive
integers and division exact.  The slot layout is number, operator, number, operator, number over a
shared four-value domain, with hard unary type factors and pairwise all-different factors on the
number slots.  Renderings show the fully parenthesized expression and never its value, because a
shown value would let string comparison against the target replace the scorer.  One run per
backbone records every scorer on identical pools ($K=8$, decode seed 17, greedy included).
Both floors were computed before execution: chance $1/96$ per instance and $0.132$ for
the best single input-agnostic assignment (the best fixed template) over the instance set.

\paragraph{Referential JSON.}
Instances are generated deterministically from seed 20270: two distinct identifiers drawn from
six single letters, two distinct roles, and an operation the actor's role permits, so every
instruction is feasible.  The output grid has 73 slots over 27 symbols, each verified
single-token on all three checkpoints; the schema automaton has 74 states and fixes every
structural slot, while seven type unaries, an all-different factor on the declared identifiers,
two reference-resolution factors, and two role-permission factors constrain the seven content
slots.  Forward reachable layers restrict token and state domains before compilation, which is
arc consistency and preserves the support; the weighted min-fill order then realizes induced
width 5 and an 864-entry peak table under declared budgets of 4{,}194{,}304 per table and
33{,}554{,}432 in total.  The feasible set is instance-independent and enumerated exactly at
2{,}160 records, each instance has exactly two correct records (the two declaration orders, both
accepted), so chance is $2/2{,}160$, and the best fixed template over the 500 instances is
$0.016$.  One forward pass on the fully masked grid supplies the logits for every method; the
prompt carries one fixed format demonstration, identical across methods and instances.

\paragraph{Variable-length JSON.}
The record holds two or three users, each with a two-token identifier (a letter and a digit),
and the action object may carry a trailing note, so valid records span 77 to 104 tokens over a
34-symbol alphabet.  A 132-state token-level automaton accepts exactly the four templates plus
trailing padding, padding maps to the tokenizer's end-of-text identifier so that record
termination is priced by the model's own belief that the answer has ended, and a reference
resolves through per-user binary match indicators with a two-state accumulator chain, keeping
every relational factor at arity four or less.  Realized peaks are 54,432 table entries at two
users and 7,620,480 at three under a declared 16,777,216 budget, elimination orders come from
the randomized order search, and the exact top-8 branches only over content positions, which is
exact because states, padding, and indicators are functions of them.  Scoring parses the
rendered text with a JSON parser, and 240 seeded instances mix both sizes.  A first run padded
with an arbitrary symbol instead of end-of-text; the padding was corrected and the run repeated.

\paragraph{Open-vocabulary JSON.}
The record's note value is free text over the model's entire vocabulary, one to three tokens,
while every other field keeps its guarantees.  Exactness survives through a class quotient: the
automaton and every factor depend on a token only through its class, so reducing the
full-vocabulary logits over the open class per slot, eliminating over the class-level graph,
and expanding the winner back to its best token computes the exact optimum over the full space.
The open class is every identifier whose decoded text contains no double quote, backslash, or
control character (146,922 of Dream's 152,064), so any completion renders as valid JSON and the
closing quote stays deterministic.  Three mechanisms were found on the way, each by a traced
probe: a single masked forward gives diffuse marginals whose
full-vocabulary argmax is a filler token; mean-field token scores cannot price open-span
termination, so the note length is declared from the instructed phrase's own tokenization; and
a masked open slot draws a confidently wrong reply-prefix token before structure exists, so the
iterative decoder commits closed slots first and fills open slots from the final forward pass
over the completed structure.  Under that schedule all three backbones copy the instructed
phrase on 240 of 240 records, and Dream records its first nonzero semantic result on a JSON
task (0.200 against 0.000 for its own single-pass MAP), consistent with the drift study's
finding that Dream's composition benefits from departing from its step-$T$ head.

A free-termination variant removes the declared note length.  It decodes over the
union automaton of all three span lengths: closed slots before the note commit first, the note
is then walked left to right with one forward pass per position, and each position commits the
closing quote when its log-probability exceeds that of the best free-text token, and the best
free-text token otherwise.  The remaining structural tail is forced and commits by exact MAP.
Every realized branch lies in the automaton's language, so the validity guarantee is unchanged,
and validity is 100\% on both axes for every record.  The note is copied exactly, with the correct
length, on 0.996 (Dream), 0.983 (LLaDA-8B), and 0.983 (LLaDA-1.5) of records, against 1.000 for
the declared-length decoder run alongside.  Proposition~\ref{prop:termination} explains the
difference: a joint MAP over the padded grid compares lengths through the padding score, while
the sequential decision compares the model's own closing-quote and content probabilities.

\paragraph{Exact top-$k$ selection.}
Lawler partitioning over the compiled plan returns the $k$ highest-scoring valid assignments in
score order: each popped solution splits its region into disjoint children by forcing a growing
prefix and banning the branch value, so every returned assignment is the exact maximizer of a
disjoint region.  The implementation is verified against brute-force enumeration on random
graphs at four pool sizes.  On meeting planning the pool is the exact top-8 of the step-$T$
constrained distribution from one forward pass, rendered and scored by the same
pseudo-likelihood scorer as the sampled pipeline; eleven records per three-person cell have
supports below eight and are therefore exhaustive.

\paragraph{Pseudo-likelihood scorer.}
A candidate rendering $y=(y_1,\ldots,y_n)$ is appended to the prompt $c$ and scored by the
standard masked-language-model pseudo-log-likelihood~\citep{salazar2020mlm},
\begin{equation*}
 \mathrm{PLL}(y\mid c)=\sum_{j=1}^{n}\log p_\theta\bigl(y_j \mid c,\, y_{\setminus j}\bigr),
\end{equation*}
where $y_{\setminus j}$ is the rendering with position $j$ replaced by the mask token and every
other position visible.  Each term takes one forward pass with a single masked position, read at
the model's official logit position, and the sum is unnormalized.  Masking is necessary because
both backbones are bidirectional: an unmasked logit attends to the token it scores.  The
candidate with the largest PLL is returned.

\paragraph{Composed-sampler drift.}
Per-step exactness composes into a distribution no single step defines, and Countdown's 96
enumerable assignments make the gap measurable: for the first 32 released instances, the
step-$T$ constrained distribution $p_T$ is computed in closed form from the first forward pass,
256 composed draws follow the recorded multi-step protocol, and 256 matched draws from $p_T$
supply the finite-sample floor for every statistic.  The floor draws' mean $\log p_T$ sits
within three standard errors of $-H(p_T)$ on both backbones, validating the pipeline.  Composed
draws reach total variation 0.537 (Dream) and 0.628 (LLaDA-8B) against floors of 0.126 and
0.216, an identical excess of 0.412, with opposite directions: Dream's draws land 0.73 nats
below the entropy-matched expectation of $\log p_T$ and LLaDA's 0.45 nats above, so composition
disperses one backbone away from its step-$T$ head and concentrates the other on it, matching
which pool wins in the exact-pool study.

\paragraph{Uncompiled elimination control.}
On the eight cells where CP-SAT proves every query optimal, the identical queries run with the
symbolic plan, the device plan, and the factor transfer rebuilt on every query.  Every one of
the 640 measured trials must reproduce the compiled CPU score exactly; the amortization factor
is the ratio of mean uncompiled to mean compiled latency.

\paragraph{Judge order probe.}
Candidate pools of the matched scorer ablation are reconstructed with identical decode seeds and
judged under the recorded sorted order and ten seeded permutations (order seeds 31 through 40,
fixed before execution).  Agreement is the fraction of permuted trials that repeat the
sorted-order choice, the permuted optimum is averaged over the ten orderings, and the
symmetrized score averages the eleven aligned log-score vectors.  All six Dream and LLaDA-8B
meeting cells are probed, plus the LLaDA-1.5 three-person cell.

\section{Prompt and rendering examples}
\label{app:listings}

One complete example per task family, produced verbatim by the released task modules.  Every
prompt states the task, the compact slot code, and the per-slot value legend; the decoder reads
logits only at the slot positions, restricted to the legend tokens.  Renderings are the prose
forms scored by the selection stage; the Countdown rendering deliberately omits the computed
value (\S\ref{sec:select}).

\subsection{Relational Countdown}

\begin{footnotesize}
\begin{verbatim}
Numbers: 6, 2, 8.  Target: 11.
Combine the three numbers into one arithmetic expression that
equals the target.  Use each number exactly once.  The expression
is evaluated strictly left to right with no operator precedence,
every intermediate value must be a positive whole number, and
division must leave no remainder.

Use the compact expression code below.
Number codes: 1=6, 2=2, 3=8.  Operator codes: 1=+, 2=-, 3=*, 4=/.
Return exactly 5 digits: first number code, operator code, second
number code, operator code, third number code, with no spaces or
explanation.
\end{verbatim}
\end{footnotesize}

\noindent Candidate rendering scored by the selection stage: \texttt{((6 + 2) * 8)}.

\subsection{Meeting planning (prompt tail)}

\begin{footnotesize}
\begin{verbatim}
CONSTRAINTS: You arrive at Bayview at 9:00AM. Ronald will be at
Alamo Square from 8:30AM to 7:45PM. You'd like to meet Ronald for
a minimum of 90 minutes. Richard will be at Union Square from
2:30PM to 9:45PM. You'd like to meet Richard for a minimum of 30
minutes. Kenneth will be at Golden Gate Park from 10:00AM to
3:15PM. You'd like to meet Kenneth for a minimum of 60 minutes.

Use the compact partial-schedule code below.
Return exactly one digit per person in the listed order.
Use 1 to skip a meeting when it cannot or should not be scheduled.
Return no spaces, punctuation, or explanation.
Position 1, Ronald: 1=SKIP, 2=11:10AM, 3=9:16AM, 4=3:15PM.
Position 2, Richard: 1=SKIP, 2=2:30PM.
Position 3, Kenneth: 1=SKIP, 2=10:55AM, 3=10:00AM.
\end{verbatim}
\end{footnotesize}

\noindent Candidate rendering: \texttt{Plan: meet Ronald at 9:16AM, meet Richard at 2:30PM,
meet Kenneth at 10:55AM.}  The judge scorer lists such renderings as a numbered set and reads
the label distribution at one masked answer position.

\subsection{Trip planning (prompt tail)}

\begin{footnotesize}
\begin{verbatim}
Find a trip plan of visiting the cities for 15 days by taking
direct flights to commute between them.

Use the compact itinerary code below.
City codes: 0=Prague, 1=Lyon, 2=Barcelona, 3=Helsinki.
Return exactly 4 digits giving the cities in visiting order, with no
spaces or explanation.
\end{verbatim}
\end{footnotesize}

\noindent Candidate rendering: \texttt{Trip plan: days 1-3 in Prague, then days 3-8 in Lyon,
then days 8-13 in Barcelona, then days 13-15 in Helsinki.}

\section{Commitment and evaluation details}

\subsection{The decoding loop}

\begin{algorithm}[ht]
\footnotesize
\caption{FactorDLM decoding.  \textsc{Eliminate} is exact variable elimination along the
precompiled elimination order: max-product with backtracking for MAP, or sum-product with
backward ancestral sampling for an exact draw from Equation~\eqref{eq:factor-posterior}.}
\label{alg:decode}
\begin{algorithmic}[1]
\Require prompt $c$; slots $1{:}L$ with domains $\mathcal X_i$; hard factors $\{f_a\}$; steps $T$
\State compile the elimination plan once; reject any bucket over its entry budget before allocation
\State $x^T \gets$ all slots masked
\For{$t = T, \ldots, 1$}
  \State $M_t \gets$ the masked slots of $x^t$
  \State $u_i(\cdot) \gets$ dLLM logits at slot $i$ restricted to $\mathcal X_i$
         \Comment{one forward pass}
  \State $u_i(v) \gets -\infty$ for every committed slot $i$ and every $v \neq x^t_i$
         \Comment{clamp evidence}
  \State $\hat x \gets \textsc{Eliminate}(u, \{f_a\})$
         \Comment{jointly valid by construction}
  \State $c_i \gets \operatorname{softmax}(u_i)[\hat x_i]$ for every $i\in M_t$
         \Comment{model confidence}
  \State $A_t \gets$ the $\lceil |M_t|/t \rceil$ slots of $M_t$ with the largest $c_i$
  \State $x^{t-1}_i \gets \hat x_i$ for $i\in A_t$; $x^{t-1}_i \gets x^t_i$ for $i\notin M_t$;
         $x^{t-1}_i \gets \texttt{[MASK]}$ for $i\in M_t\setminus A_t$
\EndFor
\State \Return $x^0$
\end{algorithmic}
\end{algorithm}

\subsection{Proofs of Propositions~\ref{prop:copy}, \ref{prop:quotient},
and~\ref{prop:termination}}
\label{app:copy-proof}

\begin{proof}
(i) Consider the pairs $(w,w)$ for $w\in\{1,2,3,4\}^k$.  Each concatenation $ww$ is in the
language, so fix one accepting run per $ww$ and let $s_w$ be the state that run occupies after
reading the first half.  If the automaton has fewer than $4^k$ states, then by the pigeonhole
principle two distinct prefixes $w\neq v$ share $s_w=s_v$; splicing the first half of $w$'s run onto the second half of $v$'s
run yields an accepting run for $wv$, which is not in the language.  At least $4^k$ states are
therefore required, deterministic or not; the pair set is a fooling set in the sense
of~\citet{glaister1996fooling}, and the argument is identical for any alphabet size $q$.  The
factor graph is a matching of $k$ equality edges, each a $4\times4$ table of 16 entries;
eliminating either endpoint of every edge creates no fill, so the induced width is one and the
peak elimination table has 16 entries.

(ii) Fix the other slots to feasible values and restrict attention to the $2r$ equality slots;
by hypothesis the relation restricted to them is exactly the $r$ equalities, so every $w\in[q]^r$
placed on the cut-left endpoints and repeated on the matched cut-right endpoints completes to a
member of the full relation.  These pairs $(w,w)$ form a fooling set exactly as in part (i): for
$w\neq v$ the splice $wv$ violates some equality, while shared states at the cut would construct
an accepting run for it.  Because the fooling strings lie in the full relation and the splices do
not, any automaton for the full relation has at least $q^r$ states, deterministic or not.  Each
equality is one $q\times q$ factor between its endpoints; the $r$ factors form a matching, so the
induced width is one.  For the aggregate, the automaton tracking the running sum modulo $q$ has
$q$ states; as a factor graph it is the chain $s_i=(s_{i-1}+x_i) \bmod q$ with ternary transition
factors of $q^3$ entries and width two, the sequential encoding the compiler uses for
termination, while the direct encoding is one factor over all $L$ variables with elimination
width $L-1$.  All-different is analogous: the automaton tracks the set of used values, $2^n$
states, while the direct encoding by pairwise inequalities has width $n-1$.
\end{proof}

\begin{proof}[Proof of Proposition~\ref{prop:quotient}]
Fix a class assignment $y$.  Every factor term is constant over the token assignments $x$ with
$c(x_i)=y_i$, so the maximum (respectively the sum) of $\exp\{\sum_i u_i(x_i)+\sum_a
f_a(x_{S_a})\}$ over that set factorizes into the factor terms at $y$ times the per-slot maxima
(respectively sums) of $\exp u_i$ within class $y_i$.  Maximizing or summing over $y$ with the
reduced unaries therefore equals the full-alphabet maximum or partition function, the joint
distribution factorizes as $p(y)\prod_i p(x_i\mid y_i)$ with $p(x_i\mid y_i)$ the unary softmax
restricted to class $y_i$, and the MAP expands each winning class to its within-class argmax.
All three identities are additionally verified numerically against brute-force enumeration on
random instances in the released tests.
\end{proof}

\begin{proof}[Proof of Proposition~\ref{prop:termination}]
Adjacent templates share their prefix and their closed content, so those unary terms cancel in
the score difference.  Let the suffix $S_1\cdots S_m$ start at slot $p$ in the shorter template,
with $S_1=\square$ the closing delimiter, and at $p+1$ in the longer one.  What remains is the
reduced free-text score $r_p$ at the inserted slot, plus $\sum_{j=1}^{m}u_{p+j}(S_j)$, minus
$\sum_{j=1}^{m}u_{p+j-1}(S_j)$, minus the padding unary at slot $p+m$, which the longer template
leaves unpadded.  The $j=1$ term of the second sum is $u_p(\square)$, so the difference is the
local margin $r_p-u_p(\square)$ plus the net shift of the remaining suffix unaries minus the
tail padding unary, as stated.  The identity is verified numerically on the released templates
with random unaries and arbitrary closed content.
\end{proof}

\subsection{Commitment schedules}
\label{app:schedule-details}

The causal schedule matrix also rejects a stronger scheduling hypothesis: holding factors,
model, projection step, and support fixed, confidence order beats separator order by 16.7
normalized points and $6.6\times$ in optimum rate on LLaDA, so under exact support, scheduling
should defer uncertainty.

At every step, group-score ties are broken by the lexicographic variable tuple and position-score
ties by position.  An atomic policy takes complete ranked groups until it meets or exceeds the
nominal budget; the split control fills the budget exactly.  Each trace records the budget,
candidate groups, scores, ranking, selected positions, and policy.

For dependence-dispersed batches, let $C_1,\ldots,C_m$ be primal-graph components and let the
declared capacities be $s_b=|C_b|$.  The compiler constructs $B_1,\ldots,B_m$, with
$|B_b|=s_b$, to greedily reduce
\begin{equation}
 R(B)=\sum_b\sum_j \binom{|B_b\cap C_j|}{2}.
 \label{eq:batch-redundancy}
\end{equation}
It processes components by decreasing size and then lexicographically.  Each variable enters the
largest-capacity batch not yet containing its component; if none exists, it enters the largest
remaining batch.  Batch-index ties are deterministic.  This preserves the component-atomic call
budget while separating dependent variables whenever capacity permits.  Proposed batch values are
ranked by mean local model confidence; factor-marginal ranking is an ablation.

For a connected graph, the static separator policy obtains a deterministic greedy min-fill tree
decomposition.  Within each decomposition-tree component, it selects the bag whose removal
minimizes the largest remaining tree component, with ties resolved by the sorted bag tuple.  It
emits unseen variables in numeric order and recurses through remaining tree components
lexicographically.  The residual policy repeats this construction on the still-masked induced
graph.  Separator policies take the first nominal-budget variables and never remask.

The causal matrix crosses model, uniform, random, shuffled-position, and oracle-biased unary potentials with
independently selected schedules.  Oracle-biased unary potentials are non-deployable.  Model, prompt,
factors, exact proposal rule, steps, and feasible support remain fixed in every schedule contrast.

\subsection{Extended benchmark protocols}

The MMLU same-order-copy overlap uses 240 questions from four subjects, independently hashed label
permutations, group sizes four and eight, and 16/32 steps.  The powered causal evaluation uses
1,200 questions in 200 groups from 40 subject-disjoint categories, MAP and sampling, and three
sampling seeds.  The group is the statistical unit.

The all-record NATURAL PLAN task keeps 100 three-person meeting records and adds a \textsc{skip}
value.  Eighty-six records have oracle objective three and 14 have objective two.  Pair factors
encode temporal order and travel; exhaustive support has at most 57 assignments.  The matrix
crosses exact projection, five unary conditions, schedules, MAP/sampling, and a utility-solver
ceiling.

The BFCL-derived task pins all 124 base multi-turn records having a repeated scalar literal.  Each
call chooses the gold call, three type-preserving distractors, or \textsc{unused}; only repeated
literal consistency is hard-constrained.  The official state-and-response checker is isolated in a
fresh namespace for every candidate evaluation.  The corrected development matrix compares
independent, one-shot CP-SAT, iterative factor schedules, global top-five verification, and complete
per-component verification.  It is a controlled derivative, outside the official BFCL leaderboard
submission.

\section{Detailed limitations}
\label{app:detailed-limitations}

\paragraph{Width and scale.}
Exact inference costs $O(d^{w+1})$ per bucket, so densely coupled constraints exceed the declared
budget and are refused before allocation; the 110 large $N$-queens boards and seven-city trip
planning are the measured cases.  Approximate inference such as loopy belief propagation could
cover them, but it gives up the support guarantee that the paper claims, and compilation to
circuits is the exact alternative that we leave to future work.  Device-resident float64 bucket
operations run as standard tensor kernels, and low overhead on width-one and width-two workloads
leaves the cost of large factors open; width-nine Sudoku is the measured stress case.  The
placement audit shows that CUDA is slower than the CPU for tiny 16-entry buckets, so device
placement must follow table size.

\paragraph{Output length and vocabulary.}
Variable-length outputs use a length bound with end-of-text padding, and termination lives in the
automaton state; unbounded generation is outside the evaluated scope.  Open-vocabulary fields are
exact through the class quotient, and the evaluated free-text spans are capped at three tokens.

\paragraph{Baseline scope.}
The finite-support baseline is exact but exhaustive.  It matches the constrained posterior of the
closest finite-automaton method on feasible cases, and it is a semantic reference in place of a
runtime reproduction of the optimized log-depth algorithm of Dang and Ermon.  The CFG, EPIC, and
LAVE Sudoku executions use their official decoders through an adapter that expresses the
clue-derived solution set as a per-instance grammar; this measures a shared finite-language
overlap, and a runtime comparison on code or SMILES remains open.  The executable matrix scores
the direct output before optional completion-based repair, and the CFG and EPIC papers also
report repaired modes, so a direct-output win speaks to the direct setting.  The scalable copy
CFG probe enumerates a fixed-length finite language because copy is outside the context-free
languages.  Our SOAR comparison independently implements the published schedule and defaults
because the official repository lacks a root license, and DINGO's locked official repository
contains no executable source.  These boundaries restrict runtime rankings to the shared Sudoku
workload.

\paragraph{Task scope.}
The MMLU protocol evaluates semantic preservation under a relational constraint.  The significant
LLaDA result is against exact finite support, and the added SOAR and Dream comparisons are
nonsignificant.  GRAM confirms validity without a significant semantic advantage.  NATURAL PLAN
uses records where all meetings are feasible, and candidate-time preprocessing is outside decoding
latency.  The adapters cover the evaluated schemas; database schemas, high-arity scheduling, and
automatic specification parsing remain future work.  Selection experiments render and jointly
score candidates at once, so they leave open whether rendering or joint scoring carries the gain.

\paragraph{Comparisons not run.}
Five comparisons would sharpen the attribution and are left open.  An autoregressive model of
comparable size could use the same factor graph as a per-token feasibility oracle, since a
prefix clamped as evidence leaves a feasible completion exactly when elimination finds nonzero
mass; nothing in the representation argument is specific to arbitrary order, and the comparison
would show what arbitrary-order decoding adds.  On meeting planning, a uniform pool of feasible
candidates scored by the same PLL would separate the model's proposals from the scorer; on
referential JSON the union bound in \S\ref{sec:select} already rules that pool out.  Dream's zero
instructed-record rate on referential JSON, with a zero top-8 pool oracle, was not tested against
Dream's native free-text generation, so an interface cause is bounded only indirectly, by the
Countdown surface-token probe and by Dream copying free text on all 240 open-vocabulary records.
The repaired modes of the CFG and EPIC decoders were not executed.  Selection was not timed end to
end, and variable-length JSON latency at its 7.6M-entry peak tables was not recorded.

\paragraph{Unique completions.}
The Sudoku factors encode public task rules and exclude the gold grid.  When rules and clues admit
a unique completion, exact conditioning can determine it without useful model evidence, so we
report the format+clue control, validity, exact match, and latency separately.  A hard constraint
can also preserve structural validity while selecting a wrong semantic mode.

\section{Complete supplementary result tables}
\label{app:complete-results}

Figure~\ref{fig:systems-evidence}(a) plots each cell's paired repeated-query speedup against
CP-SAT with eight parallel workers, the matrix's single-worker speedup divided by the solver's
own worker speedup in that cell; their geometric mean is $13.64\times$, the value the body quotes.
Against the deterministic single-worker solver of the scaling matrix, the geometric means are
$19.33\times$ for the reused plan and $10.05\times$ when both solvers build their model per query
($7.10\times$ at eight workers).  Repeated projection wins all eight cells under either
configuration, and the one-shot diagnostic, which pays compilation, wins seven.  Each domain--width cell in panel (c) counts five variable sizes.

\begin{table}[ht]
\centering
\small
\begin{tabular}{lccc}
\toprule
& Dream-7B & LLaDA-8B & LLaDA-1.5 \\
Method & \multicolumn{3}{c}{syntax / references / instructed record} \\
\midrule
\emph{Chance floor (analytic)}   & \multicolumn{3}{c}{\emph{0.0009}} \\
\emph{Best fixed template}       & \multicolumn{3}{c}{\emph{0.016}} \\
\midrule
Independent      & 0.000 / 0.000 / 0.000 & 0.000 / 0.000 / 0.000 & 0.000 / 0.000 / 0.000 \\
Automaton only   & 1.000 / 0.474 / 0.000 & 1.000 / 0.480 / 0.186 & 1.000 / 0.516 / 0.224 \\
Relations only   & 0.000 / 1.000 / 0.000 & 0.000 / 1.000 / 0.000 & 0.000 / 1.000 / 0.000 \\
Hybrid MAP       & 1.000 / 1.000 / 0.000 & 1.000 / 1.000 / 0.414 & 1.000 / 1.000 / 0.434 \\
Hybrid top-8 PLL & 1.000 / 1.000 / 0.000 & 1.000 / 1.000 / \textbf{0.920} & 1.000 / 1.000 / \textbf{0.964} \\
\midrule
Top-8 pool oracle & 0.000 & 0.982 & 0.984 \\
\bottomrule
\end{tabular}
\caption{Referential JSON over 500 instances per backbone: syntax validity, referential
validity, and the instructed-record rate.  Neither representation alone enforces both axes; the
hybrid enforces both on every record.  Dream returns valid records that never match the
instruction, and its pool oracle is zero, so its failure is in the constrained distribution's
head, which the scorer can only choose from.}
\label{tab:refjson}
\end{table}

The closest evaluations cover regular languages with DINGO~\citep{suresh2025dingo}, context-free
languages with CFG decoding~\citep{mundler2025cfg} and LAVE~\citep{zhang2026lave}, and global
finite-automaton inference~\citep{dang2026automata}. EPIC supplies the strongest released
CFG-efficiency baseline~\citep{jin2026epic}. Table~\ref{tab:methodology-alignment} maps
their common evidence axes to ours.

\begin{table}[ht]
\centering
\small
\begingroup
\setlength{\tabcolsep}{3pt}
\begin{tabular}{p{0.75in}p{1.1in}p{1.55in}p{1.4in}}
\toprule
Work & Constraint/tasks & Quality evidence & System evidence \\
\midrule
DINGO & Regular; GSM-Symbolic, JSON & accuracy, parse and schema validity & wall time, automaton precomputation \\
CFG & Context-free; C++, JSON, SMILES & syntax, exact match, pass@1 & inference and completion behavior \\
LAVE & Context-free; C++, JSON, SMILES & syntactic@k, functional@k & mean time, lookahead ablations \\
EPIC & Context-free; C++, JSON, SMILES & syntax and functional correctness & normalized time, breakdown, steps, ablations \\
Finite automata & Automata; function calls, Sudoku, planning, SQL, math & accuracy and constraint satisfaction; greedy and sample & latency, throughput, automaton size, step sweep \\
FactorDLM & Factor graphs; Sudoku, coloring, copy, meetings & validity, exact match, task success, agreement, paired inference & latency decomposition, memory, steps, width, CPU/CUDA, failure boundary \\
\bottomrule
\end{tabular}
\endgroup
\caption{Evaluation-methodology alignment with the closest constrained-dLLM systems. Native tasks
differ by representable constraint class. Sudoku supplies the exact shared protocol with finite
automata; the official CFG engine supplies the representation probe.}
\label{tab:methodology-alignment}
\end{table}

\begin{table}[ht]
\centering
\small
\begingroup
\setlength{\tabcolsep}{3pt}
\begin{tabular}{p{0.85in}p{1.35in}p{1.4in}p{1.0in}}
\toprule
Question & Metric and direction & Denominator / paired unit & Evidence \\
\midrule
Direct system quality & complete valid output without repair & 900 Sudoku per model--mode cell & Fig.~\ref{fig:sudoku-sota-quality} \\
Constraint coverage & validity; higher is better & all 900 Sudoku or all 100 plans & Fig.~\ref{fig:quality-evidence} \\
Model utility & normalized objective; invalid is zero & all 100 planning records & Fig.~\ref{fig:quality-evidence}c \\
Schedule quality & semantic-score effect; positive favors dispersed & generated groups within 40 subjects & Fig.~\ref{fig:diagnostic-effects} \\
Workflow selection & execution-success effect & all 124 development records & Fig.~\ref{fig:diagnostic-effects} \\
Exact-system win & CP-SAT/FactorDLM; above one favors FactorDLM & eight optimal cells, 20 queries each & Fig.~\ref{fig:systems-evidence}a \\
Applicability & executed variable sizes inside budget & five sizes per domain--width cell & Fig.~\ref{fig:systems-evidence}c \\
\bottomrule
\end{tabular}
\endgroup
\caption{Metric ledger. Values across rows answer different questions and are not numerically
comparable. Oracle and non-executed cross-paper values are labeled in their figure captions.}
\label{tab:metric-ledger}
\end{table}

\begin{figure}[ht]
  \centering
  \includegraphics[width=\linewidth]{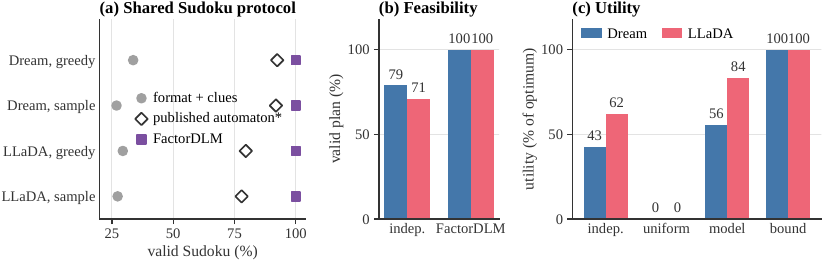}
  \caption{Task-quality evidence with metric and denominator separated by panel.  Panels (a) and
  (b) stand: shared-protocol Sudoku validity against the non-executed published automaton result,
  and planning feasibility.  \textbf{Panel (c)'s \emph{uniform} bar is a degenerate control}: it is
  zero because deterministic ties decline every meeting, so its margin reflects the tie-break.
  In panel (c), \emph{uniform} and \emph{model} are FactorDLM under uniform and model unaries.  The
  informative comparison for that panel is against the non-degenerate model-free controls in
  Table~\ref{tab:natural-plan}, where model utility is established on LLaDA and refuted on Dream.}
  \label{fig:quality-evidence}
\end{figure}

\begin{figure}[ht]
  \centering
  \includegraphics[width=\linewidth]{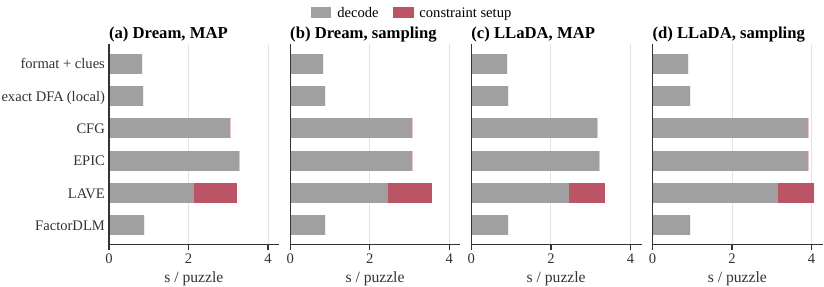}
  \caption{Matched primary-cell cost.  Each bar separates synchronized decode from per-puzzle
  grammar/checker setup; static local plans are compiled once and reused.  Lower is better.}
  \label{fig:sudoku-sota-latency}
\end{figure}

\begin{figure}[ht]
  \centering
  \includegraphics[width=\linewidth]{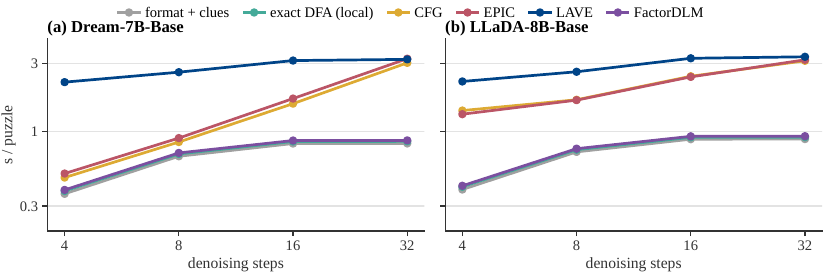}
  \caption{MAP setup-plus-decode scaling on the same clue-stratified 90-puzzle subset at
  4, 8, 16, and 32 denoising steps.}
  \label{fig:sudoku-sota-scaling}
\end{figure}

\begin{figure}[ht]
  \centering
  \includegraphics[width=\linewidth]{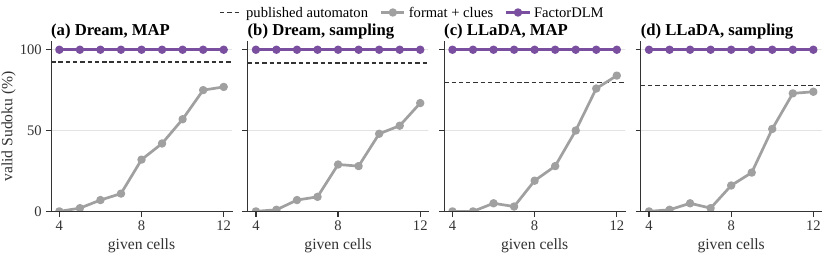}
  \caption{Official Sudoku validity by clue count. FactorDLM enforces rows, columns, and boxes; the
  local control and published finite automaton enforce only format and given clues.}
  \label{fig:sudoku}
\end{figure}

\begin{figure}[ht]
  \centering
  \includegraphics[width=\linewidth]{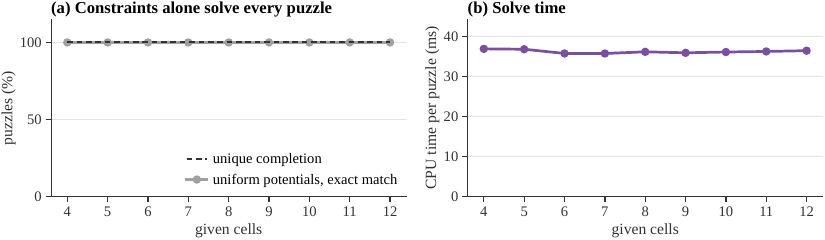}
  \caption{Solver-only Sudoku audit. Uniform unary potentials recover the unique official completion on all
  900 puzzles, proving that the shared-task result measures relational coverage rather than improved
  model selection.}
  \label{fig:sudoku-solver-audit}
\end{figure}

Table~\ref{tab:headroom} records the exploratory oracle-gap relationship referenced in
\S\ref{sec:select}: the realized selection gain tracks how much room greedy decoding leaves,
across eight cells spanning two tasks, two backbones, and four problem sizes.

\begin{table}[ht]
\centering
\small
\begin{tabular}{lrrrr}
\toprule
Cell & Greedy & Best-of-$K$ oracle & Oracle gap & Realised gain \\
\midrule
Dream trip & 0.921 & 0.963 & 0.042 & $-0.093$ \\
LLaDA trip & 0.847 & 0.954 & 0.106 & $-0.125$ \\
\midrule
Dream meeting 5p & 0.051 & 0.276 & 0.224 & $+0.122$ \\
LLaDA meeting 5p & 0.133 & 0.367 & 0.235 & $+0.082$ \\
LLaDA meeting 4p & 0.410 & 0.750 & 0.340 & $+0.160$ \\
Dream meeting 4p & 0.040 & 0.400 & 0.360 & $+0.250$ \\
Dream meeting 3p & 0.120 & 0.500 & 0.380 & $+0.370$ \\
LLaDA meeting 3p & 0.530 & 0.970 & 0.440 & $+0.440$ \\
\bottomrule
\end{tabular}
\caption{Exploratory relation between oracle gap and reranking gain across two task families, two
backbones and four problem sizes, with Pearson $r=0.955$.  Trip-planning
cells are golden-itinerary match; meeting cells are exact-optimum rate.  The prospective calendar
holdout rejects the fitted law on LLaDA, so this figure is descriptive and does not validate
a general predictor or adaptive deployment rule.}
\label{tab:headroom}
\end{table}

\subsection{Evaluation audits}

Each audit below asks whether a published metric is reachable without model input.  The taxonomy
table names the mechanism per benchmark, the target tables give the model-free rates that match
or beat every dLLM method, and the control table separates the degenerate uniform-MAP baseline
from a proper random control.  These are the analyses summarized in \S\ref{sec:audits}.

\begin{table}[ht]
\centering
\small
\begin{tabular}{llrrr}
\toprule
People & Backbone & model $-$ uniform (sampling) & model $-$ shuffled (MAP) & Records \\
\midrule
4 & Dream & $+0.003$ [$-0.031$, $+0.038$] & $+0.023$ [$-0.006$, $+0.052$] & 100 \\
4 & LLaDA & $+0.063$ [$+0.020$, $+0.105$] & $-0.018$ [$-0.049$, $+0.014$] & 100 \\
5 & Dream & $-0.058$ [$-0.091$, $-0.025$] & $+0.015$ [$-0.025$, $+0.055$] & 98 \\
5 & LLaDA & $-0.074$ [$-0.118$, $-0.029$] & $+0.018$ [$-0.025$, $+0.061$] & 98 \\
\bottomrule
\end{tabular}
\caption{Extended NATURAL PLAN, paired record-level differences in normalized objective with 95\%
bootstrap intervals over 20,000 resamples.  The model-utility criterion passes 5 of 8 cells
and the position-semantics criterion 0 of 8; every passing MAP cell is measured against the degenerate
all-\textsc{skip} control.  At five people the model is significantly worse than uniform unary potentials on
both backbones.}
\label{tab:natural-plan-extended}
\end{table}

\begin{table}[ht]
\centering
\small
\begin{tabular}{llrrrrr}
\toprule
Backbone & People & Agreement & Judge, sorted & Judge, permuted & Judge, symmetrized & PLL \\
\midrule
Dream & 3 & 0.627 & 0.320 & 0.267 & 0.310 & \textbf{0.490} \\
Dream & 4 & 0.487 & 0.270 & 0.235 & \textbf{0.290} & \textbf{0.290} \\
Dream & 5 & 0.430 & 0.143 & 0.146 & \textbf{0.184} & 0.173 \\
LLaDA & 3 & 0.729 & \textbf{0.970} & 0.850 & \textbf{0.970} & \textbf{0.970} \\
LLaDA & 4 & 0.642 & \textbf{0.730} & 0.643 & 0.710 & 0.570 \\
LLaDA & 5 & 0.543 & 0.296 & 0.276 & \textbf{0.357} & 0.214 \\
\midrule
LLaDA-1.5 & 3 & 0.780 & 0.950 & 0.908 & \textbf{0.980} & 0.970 \\
\bottomrule
\end{tabular}
\caption{Judge order probe over the matched candidate pools: exact-optimum rate under the
recorded sorted candidate order, averaged over ten seeded permutations, and under
order-symmetrized scoring over all eleven orderings, with the order-free pseudo-likelihood
scorer for reference.  Agreement is the fraction of the ten permuted trials per record that
repeat the sorted-order choice.  Every cell is order-sensitive and none reaches the 0.80
stability threshold fixed in advance; the LLaDA-8B advantage at four and five people survives permutation
and is a content result, while its three-person tie is ordering-assisted.}
\label{tab:judge-order}
\end{table}

\begin{figure}[ht]
  \centering
  \includegraphics[width=\linewidth]{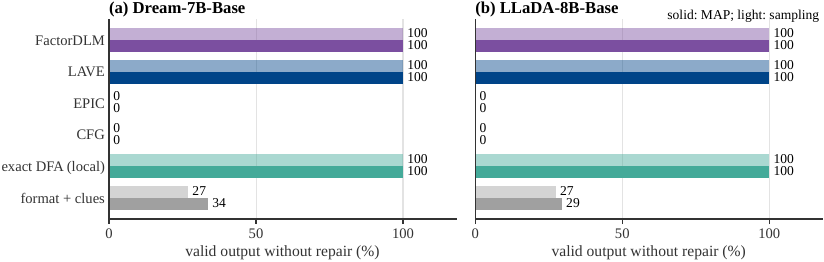}
  \caption{Direct fixed-budget validity on the matched 900-puzzle protocol.  LAVE, the local exact
  DFA, and FactorDLM tie at 100\% in every cell.  CFG and EPIC are shown with
  completion-based repair disabled (their Con.$^{-}$ and E.$^{-}$ configurations); within the
  fixed 16-slot budget neither reaches an accepting string on any puzzle, so their zeros measure
  the direct setting and say nothing about their repaired modes, which the figure does not rank.}
  \label{fig:sudoku-sota-quality}
\end{figure}

\begin{figure}[ht]
\centering
\includegraphics[width=\linewidth]{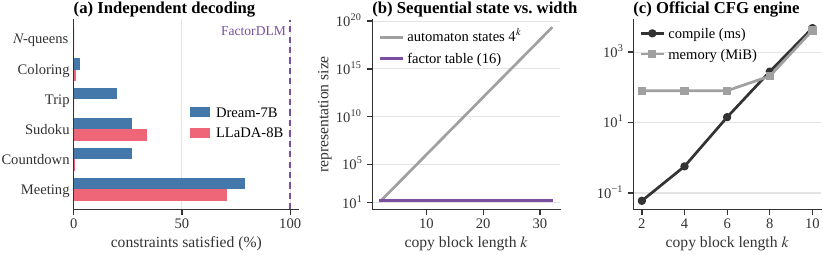}
\caption{\textbf{Relational constraints defeat independent decoding, and sequential encodings of
them are expensive.}  (a) Constraint satisfaction of independent decoding over the identical prompt
and slot layout: near zero where the relation graph is dense, and never complete.  FactorDLM is
exact by construction.  (b) The same-order copy relation $x_i=x_{k+i}$ has induced width one and a
constant 16-entry table at every $k$, while a deterministic automaton must distinguish all $4^k$
first blocks.  (c) That growth is not merely asymptotic: on the official CFG engine, the finite
restriction of the same relation costs 0.06~ms and 80~MiB at $k=2$ and 4.8~s and 4.1~GiB at
$k=10$.}
\label{fig:motivation}
\end{figure}

\begin{table}[ht]
\centering
\small
\begin{tabular}{lrrrrrr}
\toprule
& \multicolumn{2}{c}{Dream-7B} & \multicolumn{2}{c}{LLaDA-8B} & \multicolumn{2}{c}{LLaDA-1.5} \\
\cmidrule(lr){2-3} \cmidrule(lr){4-5} \cmidrule(lr){6-7}
Budget & Rerank & Oracle & Rerank & Oracle & Rerank & Oracle \\
\midrule
greedy  & \multicolumn{2}{c}{0.120} & \multicolumn{2}{c}{0.530} & \multicolumn{2}{c}{0.650} \\
$K=2$   & 0.240 & 0.250 & 0.790 & 0.790 & 0.840 & 0.840 \\
$K=4$   & 0.360 & 0.370 & 0.930 & 0.940 & 0.930 & 0.940 \\
$K=8$   & 0.490 & 0.500 & 0.970 & 0.970 & 0.970 & 0.980 \\
\bottomrule
\end{tabular}
\caption{Candidate-budget sweep behind Figure~\ref{fig:frontier}: exact-optimum rate of
pseudo-likelihood reranking and of the best-of-$K$ oracle on three-person meeting planning.
The reranker sits on or within one record of the oracle at every budget on every backbone, and
every rerank point beats greedy with exact McNemar $p \le 3.9\times10^{-6}$.}
\label{tab:ksweep}
\end{table}

\begin{table}[ht]
\centering
\small
\begingroup
\setlength{\tabcolsep}{3.5pt}
\caption{The shortcut analyses behind Figure~\ref{fig:shortcuts}: what determines each published
metric, and the model-free rule that achieves it.}
\label{tab:audits}
\begin{tabular}{p{1.05in}p{1.05in}p{1.35in}p{1.6in}}
\toprule
Benchmark & Metric & What determines it & Shortcut result \\
\midrule
Official Sudoku (900) & exact match & unique feasible completion & model-free uniform unary potentials reach 900/900 in 36.22~ms \\
Same-order copy & validity & constraint, by construction & second block is a function of the first \\
GRAM N-Queens & target agreement & canonical labeling & target is the lexicographic minimum in 405/405 multi-solution instances \\
GRAM coloring & target partition & canonical labeling & vertex-order greedy reproduces the target on 309/309 graphs; every dLLM method scores below chance \\
NATURAL PLAN & normalized objective & a degenerate control & uniform-MAP scores zero by tie-break; against random unary potentials LLaDA gains 21 points and Dream loses \\
\bottomrule
\end{tabular}
\endgroup
\end{table}

\subsection{Diagnostic schedule and workflow studies}

The original 240-question same-order-copy overlap gave a significant LLaDA component-schedule gain
over exhaustive support, but the 1,200-question, 40-subject-disjoint replication does
not support a semantic-quality improvement.  Relative to individual model-confidence commitment,
dispersed batches change MAP semantic score by $-1.08$ points on Dream (95\% cluster interval
$[-4.00,1.83]$) and $-0.25$ on LLaDA ($[-3.08,2.58]$).  Grouped commitment nevertheless cuts
latency by 952~ms on Dream and 1,149~ms on LLaDA, with both paired intervals excluding zero.  It
reduces redundant sequential calls but is not a quality mechanism.

The BFCL-derived workflow evaluation exposed mutable official-checker state across candidate calls.
The corrected adapter isolates and cleans every checker namespace; both 1,116-trial runs have
zero exact-plan invariant failures.  Complete per-component enumeration raises gold-candidate
coverage from 38.7\%/39.5\% to 100\%, but the factorized verifier obtains only 16.13\% Dream and
15.32\% LLaDA execution success, versus 15.32\%/12.90\% for one-shot CP-SAT and
16.13\%/16.13\% for component decoding.  It costs 3.33/4.14 seconds per record and passes 4 of
its 16 acceptance gates, so the disjoint validation is withheld.  These diagnostics localize the
unsupported effect to schedule quality and model-based selection inside feasible support.

\begin{table}[ht]
\centering
\small
\begin{tabular}{llrrrr}
\toprule
Model & Method & Valid (\%) & Partition (\%) & Agreement (\%) & ms/output \\
\midrule
Dream & Independent & 2.68 & 0.00 & 55.47 & 69.03 \\
      & Finite support & 100.00 & 13.87 & 76.02 & 71.71 \\
      & \textbf{Factor separator} & \textbf{100.00} & \textbf{16.78} & \textbf{76.92} & \textbf{71.16} \\
\midrule
LLaDA & Independent & 0.89 & 0.00 & 54.88 & 76.57 \\
      & Finite support & 100.00 & 15.88 & 77.52 & 79.18 \\
      & \textbf{Factor separator} & \textbf{100.00} & \textbf{16.78} & \textbf{77.82} & \textbf{78.66} \\
\bottomrule
\end{tabular}
\caption{GRAM graph coloring on the complete public test split.  Validity is the informative
column: independent decoding almost always violates an edge, and exact projection satisfies every
edge.  \textbf{The partition column measures agreement with a canonical target, so it gives little
evidence of quality.}  We audit it in \S\ref{sec:audits} and find that the published target is the
lexicographic-minimum coloring, which vertex-order greedy reproduces on 309 of 309 graphs where
three colors suffice.  Every method here
therefore scores below the 0.191 rate of choosing uniformly among partitions, and the column
measures agreement with a two-line deterministic routine.}
\label{tab:gram}
\end{table}

\begin{table}[ht]
\centering
\small
\begin{tabular}{llrrrr}
\toprule
Model & Method (MAP unless noted) & Valid (\%) & Objective (\%) & \textsc{skip} rate & Optimum (\%) \\
\midrule
Dream & Independent model & 79.00 & 42.83 & --- & 6.00 \\
      & Uniform factor & 100.00 & 0.00 & 1.000 & 0.00 \\
      & \emph{Uniform factor, sampling} & 100.00 & 57.94 & 0.450 & 22.30 \\
      & \emph{Random factor} & 100.00 & 62.61 & 0.401 & 24.30 \\
      & \emph{Shuffled-position factor} & 100.00 & 52.78 & 0.497 & 12.30 \\
      & Factor model & 100.00 & 55.83 & 0.467 & 12.00 \\
      & Utility solver & 100.00 & 100.00 & --- & 100.00 \\
\midrule
LLaDA & Independent model & 71.00 & 62.00 & --- & 30.00 \\
      & Uniform factor & 100.00 & 0.00 & 1.000 & 0.00 \\
      & \emph{Uniform factor, sampling} & 100.00 & 57.94 & 0.450 & 22.30 \\
      & \emph{Random factor} & 100.00 & 62.61 & 0.401 & 24.30 \\
      & \emph{Shuffled-position factor} & 100.00 & 86.56 & 0.173 & 65.00 \\
      & Factor model & 100.00 & 83.50 & 0.203 & 53.00 \\
      & Utility solver & 100.00 & 100.00 & --- & 100.00 \\
\bottomrule
\end{tabular}
\caption{All 100 three-person NATURAL PLAN records, with the non-degenerate controls in italics.
Objective is the scheduled-meeting count normalized by the instance optimum.  Uniform MAP scores
zero only because deterministic ties select \textsc{skip} everywhere, so its margin is not evidence.
Against the sampling and random controls, Dream's model unary potentials are \emph{worse} than model-free
ones, and on both backbones the objective tracks the \textsc{skip} rate rather than the unary
source.  Uniform and random are model-free and therefore identical across backbones, which is a
correctness check on the controls.}
\label{tab:natural-plan}
\end{table}

\begin{table}[ht]
\centering
\small
\begin{tabular}{llrrr}
\toprule
Model & BFCL-derived method & Execution (\%) & Exact plan (\%) & ms/output \\
\midrule
Dream & Format-only (typed slots) & 1.61 & 1.61 & 118.20 \\
      & One-shot CP-SAT & 15.32 & 15.32 & 136.96 \\
      & Component factor & 16.13 & 15.32 & 593.44 \\
      & Global verifier & 10.48 & 9.68 & 3,443.28 \\
      & Factorized verifier & 16.13 & 16.13 & 3,333.92 \\
\midrule
LLaDA & Format-only (typed slots) & 0.81 & 0.81 & 130.55 \\
      & One-shot CP-SAT & 12.90 & 12.90 & 170.34 \\
      & Component factor & 16.13 & 15.32 & 742.79 \\
      & Global verifier & 16.94 & 16.13 & 4,248.35 \\
      & Factorized verifier & 15.32 & 15.32 & 4,141.66 \\
\bottomrule
\end{tabular}
\caption{The 124-record executable-workflow study, official checker.  The format-only arm decodes
every slot from its own typed candidate list, so all of its outputs are individually well-typed,
yet only 14.5\% and 12.9\% satisfy the cross-call relations and execution collapses to 1.6\% and
0.8\%, so the residual failures on this task are relational.  Relational factors lift execution to
16.1\% on both backbones (18W/0L, $p=7.6\times10^{-6}$; 19W/0L, $p=3.8\times10^{-6}$) and match
a solver handed the same candidates (differences not significant).  Factorized verification has
complete candidate coverage but passed only 4 of 16 development criteria.}
\label{tab:bfcl-corrected}
\end{table}

\begin{figure}[ht]
  \centering
  \includegraphics[width=\linewidth]{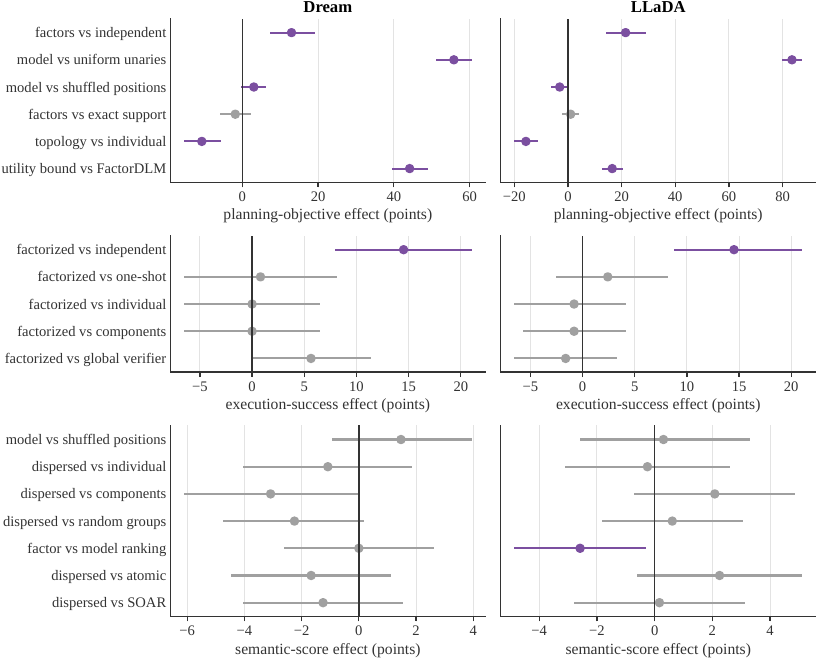}
  \caption{Complete MAP paired effects; points show estimates and bars show 95\%
  paired-unit bootstrap intervals, in purple when the interval excludes zero. Positive favors the
  first named method. Rows separate planning
  objective over 100 records, workflow execution over 124 records, and same-order semantic score
  over 200 generated groups (1,200 questions). Constraint and model-utility effects are positive,
  while factorized workflow selection and dispersed schedule quality do not beat strong controls.}
  \label{fig:diagnostic-effects}
\end{figure}

\begin{table}[ht]
\centering
\small
\begin{tabular}{rrrll}
\toprule
$k$ & Finite-state lower bound & Factor peak & Finite support & FactorDLM \\
\midrule
4  & 256 & 16 & runs & 100\% valid \\
8  & 65,536 & 16 & runs & 100\% valid \\
16 & 4,294,967,296 & 16 & state cap & 100\% valid \\
32 & 18,446,744,073,709,551,616 & 16 & state cap & 100\% valid \\
\bottomrule
\end{tabular}
\caption{Native representation and GPU decoding scale for four-way same-order copy.}
\label{tab:scaling}
\end{table}

\begin{table}[ht]
\centering
\small
\begin{tabular}{rrrrrr}
\toprule
$k$ & Alternatives & Grammar MB & Compile ms & Query ms & RSS MiB \\
\midrule
2  & 16        & 0.0002 & 0.06    & 0.04    & 80.0 \\
4  & 256       & 0.0046 & 0.57    & 0.43    & 80.0 \\
6  & 4,096     & 0.1065 & 14.18   & 11.70   & 80.0 \\
8  & 65,536    & 2.2282 & 278.48  & 377.01  & 206.9 \\
10 & 1,048,576 & 44.0402& 4,764.55& 8,238.87& 4,114.6 \\
\bottomrule
\end{tabular}
\caption{Official CFG engine on finite same-order-copy grammars.}
\label{tab:cfg}
\end{table}

\begin{figure}[ht]
  \centering
  \includegraphics[width=\linewidth]{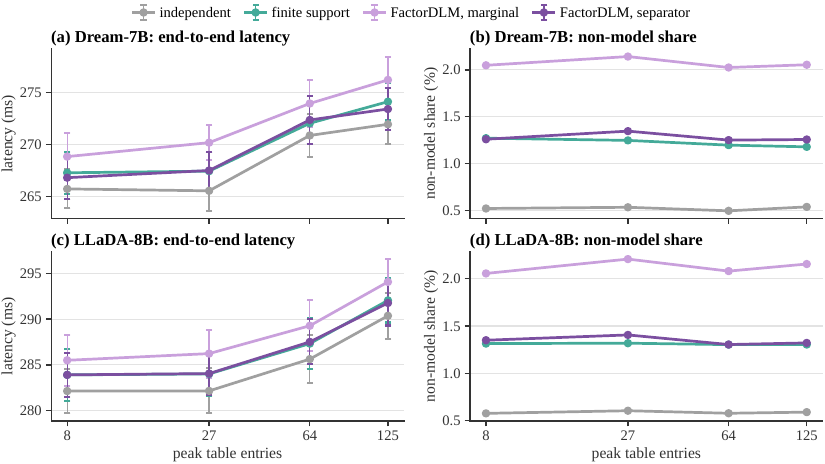}
  \caption{Controlled A6000 runtime. Error bars are standard deviations over 20 repeats per
  method--record cell; the right column decomposes synchronized end-to-end time.}
  \label{fig:runtime}
\end{figure}

\begin{figure}[ht]
  \centering
  \includegraphics[width=\linewidth]{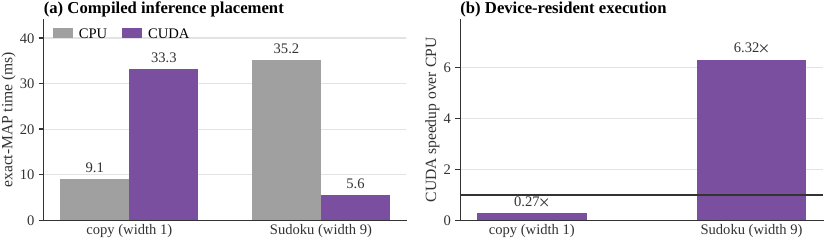}
  \caption{Exact CPU/CUDA placement validation over 100 balanced repetitions per workload.  CUDA
  is $6.32\times$ faster on the width-nine Sudoku graph, the CPU is $3.65\times$ faster when
  width-one tables are too small to amortize kernel launches, and all 400 cross-device
  comparisons agree exactly.}
  \label{fig:device-validation}
\end{figure}

\begin{figure}[ht]
  \centering
  \includegraphics[width=\linewidth]{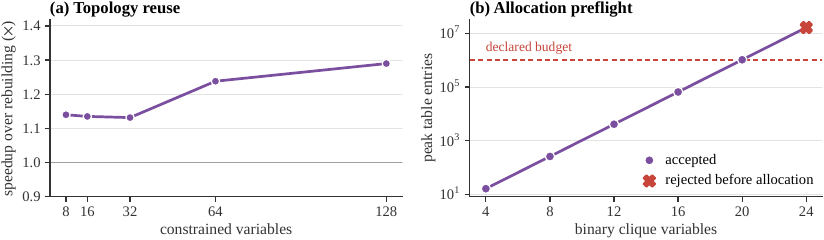}
  \caption{Compiled-plan CPU speedup and fail-fast allocation boundary. The 24-variable clique is
  rejected symbolically before its dense table allocates.}
\label{fig:production-validation}
\end{figure}

\begin{figure}[ht]
  \centering
  \includegraphics[width=\linewidth]{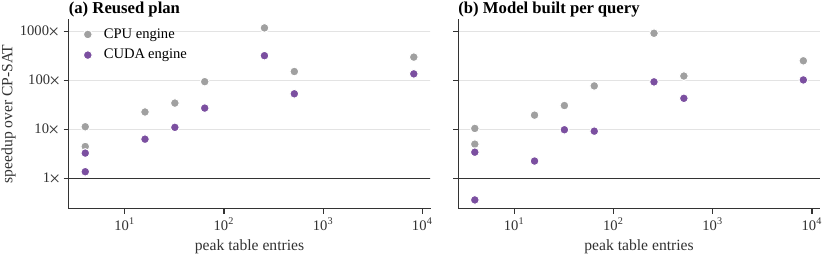}
  \caption{Pinned CP-SAT divided by FactorDLM latency on the eight cells where every solver query
  is optimal.  Repeated projection reuses the factor plan; one-shot includes compilation.}
  \label{fig:solver-scaling}
\end{figure}

\begin{table}[ht]
\centering
\small
\begin{tabular}{lrrrr}
\toprule
Method & Cells & vs.\ 1-worker CP-SAT & 95\% CI & vs.\ 8-worker CP-SAT \\
\midrule
Factor CPU & 8 & 61.06$\times$ & [19.43, 201.84] & \textbf{43.11$\times$} \\
Factor CUDA & 8 & 19.33$\times$ & [5.82, 64.73] & \textbf{13.64$\times$} \\
\bottomrule
\end{tabular}
\caption{Repeated changing-unary projection, geometric means over equally weighted fully optimal
solver cells.  All 320 listed factor--solver objective comparisons agree exactly.  The scaling
matrix pins one search worker for determinism; the audit re-times the same queries with
\texttt{num\_search\_workers = 8}, OR-Tools' multi-worker portfolio, which is $1.42\times$ faster
here, so the bolded column is the stricter comparison and the one quoted in the main text.  FactorDLM wins all eight cells under both configurations.}
\label{tab:solver-scaling}
\end{table}

\end{document}